\newif\ifarxiv\arxivtrue
\newif\ifcondensation\condensationfalse
\newif\ifear\earfalse
\newif\ifbiconn\biconnfalse
\ifarxiv
\let\shortcite\cite
\documentclass{article}
\else
\documentclass[letterpaper]{article} 
\usepackage{aaai23}  
\fi
\usepackage{times}  
\usepackage{helvet}  
\usepackage{courier}  
\usepackage[hyphens]{url}  
\usepackage{graphicx} 
\urlstyle{rm} 
\ifarxiv
\else
\usepackage{natbib}  
\fi
\usepackage{caption} 
\frenchspacing  
\setlength{\pdfpagewidth}{8.5in} 
\setlength{\pdfpageheight}{11in} 
\pdfinfo{
   /TemplateVersion (2023.1)
}
\def\circledarrow#1#2#3{ 
\draw[#1,->] (#2) +(100:#3) arc(100:420:#3);
}
\def\backcircledarrow#1#2#3{ 
\draw[#1,->] (#2) +(420:#3) arc(420:100:#3);
}
\usepackage{amsfonts}
\usepackage[fleqn]{mathtools}
\usepackage{cleveref}
\usepackage{subfigure}
\usepackage{algorithm}
\usepackage{algpseudocode}
\usepackage{tikz}
\usepackage{amsthm}
\usepackage{amssymb}
\usepackage{enumitem}
\usepackage{wasysym}
\usepackage{url}
\usepackage[stable]{footmisc}

\newtheorem{theorem}{Theorem}
\newtheorem{lemma}[theorem]{Lemma}
\newtheorem{proposition}[theorem]{Proposition}
\newtheorem{corollary}[theorem]{Corollary}
\newtheorem{hyp}[theorem]{Hypothesis}

\title{The Small Solution Hypothesis for MAPF on Strongly Connected
  Directed Graphs Is True}
\author{Bernhard Nebel
  \ifarxiv%
Albert-Ludwigs-Universität \\
  Freiburg, Germany \\
  nebel@uni-freiburg.de
\fi%
}
\ifarxiv
\else
\affiliations{Albert-Ludwigs-Universität \\
  Freiburg, Germany \\
nebel@uni-freiburg.de}
\fi

\begin{document}
\maketitle

\begin{abstract}
  The determination of the computational complexity of multi-agent
  pathfinding on directed graphs  (diMAPF) has been an open research problem for
  many years.  While diMAPF has been shown to be
  polynomial for some special cases, it has only recently been established that the problem is
  NP-hard in general. Further, it has been proved that diMAPF will be in NP if
  the short solution hypothesis for strongly connected directed
  graphs is correct. In this paper, it is shown that this hypothesis is indeed
  true, even when one allows for synchronous rotations.
\end{abstract}

\section{Introduction}

Multi-agent pathfinding (MAPF), often also called pebble motion on
graphs or
cooperative pathfinding, is the problem of deciding
the existence of or generating a collision-free movement plan for a set of agents moving on a
graph, most often a graph generated from a grid, where agents can move to
adjacent grid cells
\cite{ma:koenig:aim-17}. Two examples are provided in Figure~\ref{F:agents}.
\begin{figure}[htb]
\centering
\resizebox{\ifarxiv%
0.7\columnwidth%
\else%
\columnwidth\fi}{!}{
\begin{tikzpicture}
\draw (0,0) rectangle +(2,2);
\draw(2,0) rectangle +(2,2);
\draw(4,0) rectangle +(2,2);
\draw(2,-2) rectangle +(2,2);
\node at (1,1.1) {\includegraphics[scale=0.075]{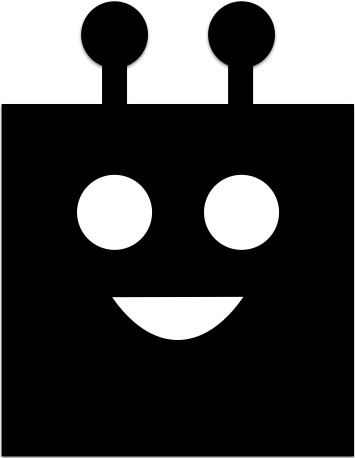}};
\node at (3,-1.1) {\includegraphics[scale=0.075]{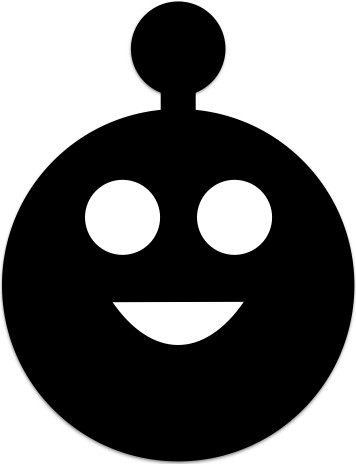}};
\node at (0.3,1.75){\Large $v_1$};
\node at (2.3,1.75){\Large $v_2$};
\node at (4.3,1.75){\Large $v_3$};
\node at (2.3,-0.25){\Large $v_4$};
\node at (3.7,+0.3) [circle,fill=black,minimum size=10pt]{};
\node at (5.7,+0.3) [rectangle,fill=black,minimum size=10pt]{};
\end{tikzpicture} \qquad
\begin{tikzpicture}
\draw (0,0) rectangle +(2,2);
\draw(2,0) rectangle +(2,2);
\draw(4,0) rectangle +(2,2);
\draw(2,-2) rectangle +(2,2);
\node at (1,1.1) {\includegraphics[scale=0.075]{robots}};
\node at (3,-1.1) {\includegraphics[scale=0.075]{robotc}};
\node at (5,1.1) {\includegraphics[scale=0.075]{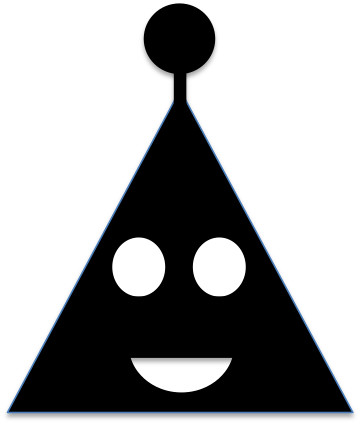}};
\node at (0.3,1.75){\Large $v_1$};
\node at (2.3,1.75){\Large $v_2$};
\node at (4.3,1.75){\Large $v_3$};
\node at (2.3,-0.25){\Large $v_4$};
\node at (3.7,+0.3) [circle,fill=black,minimum size=10pt]{};
\node at (5.7,+0.3) [rectangle,fill=black,minimum size=10pt]{};
\node at (1.7,+0.3){\includegraphics[scale=0.15]{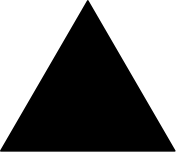}};
\end{tikzpicture}}
\caption{Multi-agent pathfinding examples}
\label{F:agents}
\end{figure}

In the left example, the circular agent $C$ needs to move to $v_2$ and the square
agent $S$ has to move to $v_3$. $S$ could move first to $v_2$ and then to
$v_3$, after which $C$ could move to its destination $v_2$.
So, in
this case, a collision-free movement plan exists.  In the right example, where
additionally the triangle agent has to move to $v_1$, there is no
possible way for the square and triangle agent to exchange their
places, i.e., there does not exist any collision-free movement plan.

Kornhauser et al. \shortcite{kornhauser:et:al:focs-84} had shown
already in the eighties that deciding MAPF is a polynomial-time
problem and movement plans have polynomial length, although it took a
while until this result was recognized in the community
\cite{roeger:helmert:socs-12}.  The optimizing variant of this
problem, assuming that only one agent can move at each time step, had
been shown to be NP-complete soon after the initial result
\cite{goldreich:manuscript-84,ratner:warmuth:aaai-86}.

Later on, variations of the problem have been studied
\cite{felner:et:al:socs-17}. It is obvious that all robots that move
to different empty nodes could move in parallel, which may lead to
shorter plans. Assuming coordination between the agents, one
can also consider train-like movements, where only the first robot
moves to an empty node and the others follow in a chain, all in one
time step \cite{ryan:jair-08,surynek:ictai-09,surynek:aaai-10}. Taking
this one step further, synchronous rotations of agents on a cycle
without any empty nodes have
been considered
\cite{standley:aaai-10,yu:lavalle:aaai-13,yu:rus:wafr-14}. And even
distributed and epistemic versions have been studied
\cite{nebel:et:al:jair-19}.

Concerning plan existence and polynomial plan length, parallel and
train-like movements do not make a difference to the case when only
simple moves are permitted. Synchronous rotations are a different
story altogether. There are problem instances which cannot be solved
using only simple moves, but are  solvable when
synchronous rotations are allowed. Nevertheless, it has been shown that MAPF is a
polynomial-time problem as well \cite{yu:rus:wafr-14}. Distributed and
epistemic versions are more difficult, however. They are NP-complete
and PSPACE-complete, respectively \cite{nebel:et:al:jair-19}.

Optimizing wrt.{} different criteria turned out to be NP-complete
\cite{surynek:aaai-10,yu:lavalle:aaai-13} for all kinds of movements,
and this holds even for planar and grid graphs
\cite{yu:ieee-ral-16,banfi:et:al:ieee-ral-17,geft:halperin:aamas-22}.
Additionally, it was shown that there are limits to the
approximability of the optimal solution for makespan optimizations
\cite{ma2016multi}.

The mentioned results all apply to undirected graphs only. However,
a couple of years ago, researchers also considered MAPF on directed
graphs (diMAPF) \cite{wang:botea:icaps-08}, and it has been proved
that diMAPF can be decided in polynomial time,
provided the directed graph is strongly biconnected\ifbiconn\else\footnote{This means that
  each pair of nodes is located on a directed cycle.}\fi{}
 and there are at least two
unoccupied vertices \cite{botea:surynek:aaai-15,botea:et:al:jair-18}. The general case for directed graphs
is still open, though.  It has only been shown that diMAPF is NP-hard
\cite{nebel:icaps-20}, but membership in NP was not established.

In general, the state space of (di)MAPF has size $O(n!)$, $n$ being
the number of vertices of the graph. However, for directed acyclic
graphs, a quadratic upper bound for the plan length is obvious,
because steps are not reversible in this case, and so each single
agent can perform at most $n$ steps. This leads immediately to
NP-completeness for directed acyclic graphs.  For strongly connected
directed graphs, this argument is not valid, however.  It was
nevertheless conjectured that plans can be polynomially bounded
\cite{nebel:icaps-20}, since this holds also for undirected graphs and
for directed strongly biconnected graphs with two empty nodes.  In
this paper, we will show that this \emph{small solution hypothesis} is
indeed true.

The crucial observation is that each movement in a strongly connected
directed graph can be reversed, which essentially means that movements
of single agents can also be thought of taking place against the direction of
an arc in directed graphs, something already noted by Ardizzoni et
  al. \shortcite{ardizzoni:et:al:cdc-22}.
This implies that we can reuse almost all
results about undirected graphs with only a polynomial overhead
in plan length---provided we are talking about single-agent movements.
However, this does not apply to synchronous
rotations. Here, a reduction to movements on the corresponding
undirected graph is impossible. By
using techniques similar to the ones Kornhauser et
al. \shortcite{kornhauser:et:al:focs-84} employed, we show that any
movement plan using only synchronous rotations on strongly connected
directed graphs can be polynomially bounded. Finally, we consider the
case, when both kinds of movements are allowed, and prove a polynomial
bound as well, which allows us
to conclude that the small solution hypothesis is indeed
true regardless of what kind of movements are possible, and therefore diMAPF is NP-complete.

The rest of the paper is structured as follows. In the next section,
we will introduce the necessary notation and terminology that is
needed for proving the small solution hypothesis to be true. We will
cover basic notation and terminology from graph theory and permutation groups, and
will formally introduce MAPF and diMAPF. In the three subsequent
sections, the small solution hypothesis will then be proven, first for
the case of simple movements, then for the more complicated case when
only synchronous rotations are permitted, and finally for the case, when
both kinds of movements are possible.
The paper ends with a short
discussion of the results.

\section{Notation and Terminology}

\subsection{Graph Theory}

A {\em graph} $G$ is a tuple $(V,E)$ with $E \subseteq \{ \{u,v\} \mid
u,v \in V\}$. The elements of $V$ are called \emph{nodes} or  {\em vertices} and the
elements of $E$ are called {\em edges}. A {\em directed graph} or {\em
  digraph} $D$ is a tuple $(V,A)$ with $A \subseteq V^2$. The elements
of $A$ are called {\em arcs}.
Graphs and digraphs
with $|V| = 1$ are called \emph{trivial}.
We
assume all graphs and digraphs to be {\em simple}, i.e., not containing any
self-loops of the form $\{u\}$, resp. $(u,u)$.
Given
a digraph $D=(V,A)$,  the {\em underlying graph} of $D$, in symbols
${\cal G}(D)$, is the graph resulting from ignoring the direction of
the arcs, i.e., ${\cal G}(D) = (V, \{\{u,v\} \mid (u,v) \in A\})$. 

Given a graph $G=(V,E)$,  $G'=(V',E')$ is called {\em
  sub-graph} of $G$ if $V \supseteq V'$ and $E \supseteq E'$.
\ifbiconn
Let
$X \subseteq V$. Then by $G-X$ we refer to the sub-graph
$(V-X, E-\{\{u,v\} \mid u \in X \vee v \in X\})$).
\fi
Similarly, for
digraphs $D=(V,A)$ and $D'=(V',A')$, $D'$ is a \emph{sub-digraph} if
$V \supseteq V'$ and $A \supseteq A'$.
\ifbiconn
Let $D=(V,A)$ and
$X \subseteq V$. Then by $D-X$ we refer to the sub-digraph
$(V-X, A-(X\times V)-(V\times X))$.
\fi

A {\em path} in a 
graph $G=(V,E)$ is a non-empty sequence of vertices  of the form
$v_0, v_1,\ldots, v_k$ such that $v_i \in V$ for all
$0 \leq i \leq k$, $v_i \neq v_j$ for all $0 \leq i < j \leq k$, and 
$\{v_{i}, v_{i+1}\} \in E$ for all $0 \leq i < k$. A {\em cycle} in a
graph $G=(V,E)$ is a non-empty sequence of
vertices $v_0,v_1,\ldots,v_k$ with $k \geq 3$ such that $v_0 = v_k$,
$\{v_i,v_{i+1}\} \in E$ for all
$0 \leq i < k$ and $v_i \neq v_j$ for all $0 \leq i < j < k$. 

In a digraph $D=(V,A)$, {\em path} and {\em cycle} are similarly
defined, except that the direction of the arcs has to be
respected. This means that $(v_{i}, v_{i+1}) \in A$ for all $0 \leq i < k$.
Further, the smallest cycle in a digraph has 2 nodes instead of 3.
A digraph that does not contain any cycle is called {\em directed acyclic
  graph (DAG)}. A digraph that consists solely of a cycle is
called \emph{cycle digraph}. A digraph that consists of a directed cycle
$v_0, v_1, \ldots, v_{k-1}, v_k=v_0$ and any number of arcs connecting adjacent
nodes in a backward manner, i.e., $(v_i,v_{i-1}) \in A$, is called \emph{partially
  bidirectional cycle graph}. 

A graph $G=(V,E)$ is {\em connected} if there is a path between each
pair of vertices.
\ifbiconn%
It is {\em biconnected} if $G-\{v\}$ is connected
for each $v \in V$. If a connected graph is not biconnected, then it
is \emph{separable}. This means that there exists a vertex $v$, called
\emph{articulation}, such that  $G-\{v\}$ is not connected
anymore.
\fi%
A connected graph that does not contain a cycle is called
{\em tree}. 

Similarly, a digraph $D=(V,A)$ is {\em weakly
  connected}, if the underlying graph ${\cal G}(D)$ is connected. It
is {\em strongly connected}, if for every pair of vertices $u,v$, there
is a path in $D$ from $u$ to $v$ and one from $v$ to $u$.
\ifbiconn%
A digraph is called {\em strongly biconnected} if it is strongly
connected and the underlying graph ${\cal G}(D)$ is biconnected (as,
e.g., the digraph in Figure~\ref{F:conn}(a)).
\fi%
The smallest
strongly connected \ifbiconn(and strongly biconnected)\fi{} digraph is
the trivial digraph.
\ifbiconn%
Directed graphs that are strongly connected but not biconnected are
called \emph{separable} (as their undirected counterparts). They
possess one or more articulation nodes, which, when removed, lead to
disconnected components, as in Figure~\ref{F:conn}(b).

\begin{figure}[htb]
\begin{center}
\resizebox{\ifarxiv%
0.6\columnwidth%
\else%
0.8\columnwidth\fi}{!}{
  \begin{tikzpicture}[node distance={15mm}, thick, main/.style = {draw, circle}] 
\node[main] (1) {}; 
\node[main] (2) [above right of=1] {}; 
\node[main] (3) [below right of=1] {}; 
\node[main] (4) [above right of=3] {}; 
\node[main] (5) [above right of=4] {}; 
\node[main] (6) [below right of=4] {};
\node (a)  [below of=3, yshift=5mm] {(a)};
\draw[->] (1) -- (2); 
\draw[->] (3) -- (1); 
\draw[->] (2) -- (4); 
\draw[->] (2) -- (5); 
\draw[->] (4) -- (3); 
\draw[->] (4) -- (5); 
\draw[->] (6) -- (4);
\draw[->] (5) -- (6);
\end{tikzpicture}\qquad
\begin{tikzpicture}[node distance={15mm}, thick, main/.style = {draw, circle}] 
\node[main] (1) {}; 
\node[main] (2) [above right of=1] {}; 
\node[main] (3) [below right of=1] {}; 
\node[main] (4) [above right of=3] {};
\draw[dashed] (4) circle [radius=4mm];
\node[main] (5) [above right of=4] {}; 
\node[main] (6) [below right of=4] {};
\node (b)  [below of=3, yshift=5mm] {(b)};
\node(art) [below of=4, yshift=-2mm] {articulation};
\draw[->,dashed,shorten >= 2.5mm] (art) -- (4);
\draw[->] (1) -- (2); 
\draw[->] (3) -- (1); 
\draw[->] (2) -- (4); 
\draw[->] (4) -- (3); 
\draw[->] (4) -- (5); 
\draw[->] (6) -- (4);
\draw[->] (5) -- (6);
\end{tikzpicture}}
\end{center}
\caption{Strongly biconnected (a) and strongly connected, but
  separable (b)  digraphs}
\label{F:conn}
\end{figure}
\fi%
\ifear%
A strongly biconnected digraph can be
decomposed into a \emph{basic directed cycle} and so-called {\em
  handles} or {\em ears} \cite[Theorem~8]{wu:grumbach:dam-10}. Formally, an {\em open ear decomposition} of
a strongly biconnected digraph $D= (V,A)$ is a sequence of pairwise disjoint
vertex sets $V_0, V_1, \ldots V_k$ with the following properties. (1)
$D[V_0]$ is a \emph{partially bidirectional cycle graph}, (2) each
$D[V_i]$, $i > 0$ is a digraph consisting of a path, and (3) each
digraph $D[\bigcup_{i=0}^j V_i]$ contains an \emph{entrance node} $s$
and an \emph{exit nodes} $t$ for the ear $V_{j+1}$ with $s \neq t$.
If the path in $V_{j+1}$ starts at $v_1$ and stops at
$v_l$, then $s$ is called entrance node for $V_{j+1}$ if $(s,v_1)$ is an arc in
$D[\bigcup_{i=0}^{j+1} V_i]$ and $t$ is called exit node if $(v_k,t)$
is an arc in $D[\bigcup_{i=0}^{j+1} V_i]$. Such a decomposition can
start with any cycle. An example of such an open ear decomposition is shown in
Figure~\ref{F:open:ear}.

\begin{figure}[htb]
  \centering
\resizebox{\ifarxiv%
0.3\columnwidth%
\else%
0.4\columnwidth\fi}{!}{
  \begin{tikzpicture}[node distance={15mm}, thick, main/.style = {draw, circle}] 
\node[main] (1) {}; 
\node[main] (2) [above right of=1] {}; 
\node[main] (3) [below right of=1] {}; 
\node[main] (4) [above right of=3] {};
\draw[dashed] (4) circle [radius=14mm,xshift=-10.5mm];
\node (V0) [above of=2,yshift=-7mm] {$V_0$};
\draw[dashed] (5) ellipse [x radius=6mm,y radius=14mm,yshift=-10.5mm];
\node[main] (5) [above right of=4] {}; 
\node (V1) [above of=5,yshift=-7mm] {$V_1$};
\node[main] (6) [below right of=4] {};
\draw[->] (1) -- (2); 
\draw[->] (3) -- (1); 
\draw[->] (2) -- (4); 
\draw[->] (2) -- (5); 
\draw[->] (4) -- (3); 
\draw[->] (4) -- (5); 
\draw[->] (6) -- (4);
\draw[->] (5) -- (6);
\end{tikzpicture}}
  \caption{Open ear decomposition}
  \label{F:open:ear}
\end{figure}
\fi%
The {\em strongly connected components} of a digraph $D=(V,A)$ are
the maximal sub-digraphs $D_i=(V_i,A_i)$ that are strongly
connected.
\ifbiconn
Similarly, a {\em strongly biconnected component} of a
digraph is a maximal sub-digraph that is strongly biconnected.
Note that each strongly connected component can be decomposed into
strongly biconnected components that are connected at articulation
points.
\fi

\ifcondensation
(see, e.g., Figure~\ref{F:SCC}(a)).
A strongly connected component consisting of one node only is called
{\em trivial} strongly connected component.
The {\em condensation} of a digraph $D$ is the digraph consisting of
its strongly connected components $D_i$: $C(D) = (\{D_i\}, \{(D_i,D_j)\mid (u,v) \in A, u \in
V_i, v \in V_j, D_i \neq D_j\})$. For instance, the graph in Figure~\ref{F:SCC}(b)
is the condensation of the graph in Figure~\ref{F:SCC}(a).
Note that a condensation is always a DAG.

\begin{figure}[htb]
\begin{center}
\resizebox{\ifarxiv%
0.6\columnwidth%
\else%
0.35\columnwidth\fi}{!}{
  \begin{tikzpicture}[thick, main/.style = {draw, circle}] 
\node[main] (1) at (0,0) {};
\node[main] (2) at (1.5,1) {}; 
\node[main] (3) at (-1.5,1) {}; 
\node[main] (4) at (-2.5,2) {};
\node[main] (5) at (-0.5,2) {};
\node[main] (6) at (1.5,2){};
\node[main] (7) at (-1.5,3){};
\node[main] (8) at ( 0.5,3){};
\node[main] (9) at ( 2.5,3){};
\node[main] (10) at (-1.5,4){};
\node[style={draw,circle,dotted,minimum size=25mm}] (c) at (-1.5,2){};
\node[style={draw,circle,dotted,minimum size=25mm}] (c) at (1.5,2.8){};
\node (a)  [below of=1, yshift=5mm] {(a)};
\draw[->] (1) -- (2); 
\draw[->] (1) -- (3); 
\draw[->] (3) -- (4); 
\draw[<-] (3) -- (5); 
\draw[->] (2) -- (6); 
\draw[->] (4) -- (7); 
\draw[<-] (5) -- (7);
\draw[<-] (5) -- (4);
\draw[->] (6) -- (8);
\draw[<-] (6) -- (9);
\draw[->] (8) -- (9);
 \draw[->] (7) -- (10);
\draw[<-] (7) -- (6);
\end{tikzpicture}\qquad
\begin{tikzpicture}[thick, main/.style = {draw, circle}] 
\node[main] (1) at (0,0) {};
\node[main] (2) at (1.5,1) {}; 
\node[style={fill,circle}] (3457) at (-1.5,2){};
\node[style={fill,circle}] (689) at (1.5,2.5){};
\node[main] (10) at (-1.5,4){};
\node (b)  [below of=1, yshift=5mm] {(b)};
\draw[->] (1) -- (2); 
\draw[->] (1) -- (3457); 
\draw[->] (3457) -- (10); 
\draw[->] (2) -- (689);
\draw[->] (689) -- (3457);
\end{tikzpicture}}
\end{center}
\caption{Directed graph with two non-trivial strongly connected
  components (a) and its condensation (b)}
\label{F:SCC}
\end{figure}
\fi

\subsection{Permutation Groups}

In order to be able to establish a polynomial bound for diMAPF
movement plans containing synchronous rotations, we introduce some background on
permutation groups.\footnote{A gentle introduction to the topic is the book by Mulholland
\shortcite{mulholland:book-21}.}

A \emph{permutation} is a bijective function over a set $X$ 
$\sigma\colon X \rightarrow X$. In what follows, we assume $X$ to be
finite.

A permutation has \emph{degree} $d$ if it exchanges $d$ elements and
fixes the rest of the the elements in $X$.
We say that a permutation is an
\emph{$m$-cycle} if it exchanges elements $x_1, \dots, x_m$ in a
cyclic fashion, i.e., $\sigma(x_i) = x_{i + 1}$ for $1 \leq i < m$,
$\sigma(x_m) = x_1$ and $\sigma(y) = y$ for all
$y \notin \{ x_i \}_{i = 1}^m$. Such a cyclic permutation is written
as a list of elements, i.e., $(x_1\ x_2\ \cdots\ x_m)$. A $2$-cycle is
also called \emph{transposition}.
A permutation
can also consist of different disjoint cycles. These are then written
in sequence, e.g., $(x_1\ x_2) (x_3\ x_4)$. 

The composition of two permutations $\sigma$ and $\tau$, written as
$\sigma \circ \tau$ or simply $\sigma\tau$, is the function mapping $x$ to
$\tau(\sigma(x))$.\footnote{Note that this order of function applications, which is used
  in the context of permutation groups, is different
  from ordinary function composition.} This operation is
associative because function composition is.
The special permutation
$\epsilon$, called \emph{identity}, maps every element to
itself. Further, $\sigma^{-1}$ is the \emph{inverse} of $\sigma$, i.e.,
$\sigma^{-1}(y) = x$ if and only if $\sigma(x) = y$. The $k$-fold
composition of $\sigma$ with itself is written as
$\sigma^k$, with $\sigma^0 = \epsilon$. Similarly, $\sigma^{-k} := (\sigma^{-1})^{k}$
We also consider the \emph{conjugate of $\sigma$ by $\tau$},
written as $\sigma^{\tau}$, which is defined to be
$\tau^{-1} \sigma \tau$. Such conjugations are helpful in creating new
permutations out of existing ones. 
We use exponential notation as in
the book by Mulholland \shortcite{mulholland:book-21}:
 $\sigma^{\alpha + \beta} := \sigma^\alpha \sigma^\beta$ and
$\sigma^{\alpha\beta} := (\sigma^{\alpha})^{\beta}$.

A set of permutations closed under composition and inverse forms a
\emph{permutation group} with $\circ$ as the \emph{product operation}
(which is associative), $\cdot^{-1}$ being the \emph{inverse
  operation}, and $\epsilon$ being the \emph{identity element}. Given a
set of permutations $\{ g_1,\ldots, g_i\}$, we say that
$\mathbf{G} = \langle g_1,\ldots, g_i\rangle$ is \emph{the permutation
  group generated by $\{ g_1,\ldots, g_i\}$} if $\mathbf{G}$ is the
group of permutations that results from product operations over the
elements of $\{ g_1,\ldots, g_i\}$. We say that
$\sigma \in \mathbf{G} = \langle g_1,\ldots, g_i\rangle$ is
$k$-expressible if it can be written as a product over the generators
using $< k$ product operations.  The \emph{diameter} of a group
$\mathbf{G} = \langle g_1,\ldots, g_i\rangle$ is the least number $k$
such that every element of $\mathbf{G}$ is $k$-expressible. Note that
this number depends on the generator set.

A group ${\mathbf F}$ is a
subgroup of another group ${\mathbf G}$, written ${\mathbf F} \leq {\mathbf G}$,  if the elements of
${\mathbf F}$ are a subset of the elements of ${\mathbf G}$.
In our context, two permutation groups are of particular interest. One
is ${\mathbf{S}}_n$, the \emph{symmetric group} over $n$ elements,
which consists of all permutations over $n$ elements. A permutation in ${\mathbf{S}}_n$
is \emph{even} if it can be represented as a product of an even number
of transpositions, \emph{odd} otherwise. Note that 
no permutation can be
odd and even at the same time \cite[Theorem~7.1.1]{mulholland:book-21}.
The set of even permutations
forms another group, the \emph{alternating group} ${\mathbf{A}}_n \leq
{\mathbf{S}}_n$. Since any $m$-cycle can
be equivalently expressed as the product of $m-1$ transpositions, $m$-cycles with even $m$ are not elements of ${\mathbf{A}}_n$.

\begin{sloppy}
A permutation group ${\mathbf{G}}$ is \emph{$k$-transitive}
if for all pairs of $k$-tuples
$(x_1, \ldots, x_k),$ $(y_1, \ldots, y_k)$, there exists a permutation
$\sigma \in {\mathbf{G}}$ such that
$\sigma(x_i) = y_i, 1 \leq i \leq k$. In case of $1$-transitivity we
simply say that $ {\mathbf{G}}$ is transitive.
\end{sloppy}

A \emph{block} is a non-empty subset $B \subseteq X$ such that for each
permutation $\sigma$, either $\sigma(B) = B$ or $\sigma(B) \cap
B = \emptyset$. Singleton sets and the entire set $X$ are
\emph{trivial blocks}. Permutation groups that contain only trivial
blocks are \emph{primitive}.

\subsection{Multi-Agent Pathfinding}

Given a graph $G =(V,E)$ and a set of {\em agents} $R$ such that
$|R| \leq |V|$, we say that the injective function
$S\colon R \rightarrow V$ is a \emph{multi-agent pathfinding (MAPF)
  state} (or simply {\em state}) over $R$ and $G$. Any node not
occupied by an agent, i.e., a node $v \in V - S(R)$, is called
\emph{blank}.

A {\em multi-agent pathfinding (MAPF) instance} is then a tuple
$\langle G,R,I,T\rangle$ with $G$ and $R$ as above and $I$ and $T$ 
MAPF states. A \emph{simple move} of agent $r$ from node $u$ to node $v$, in
symbols $m=\langle r,u,v\rangle$, transforms a given state $S$, where we
need to have $\{u,v\} \in E$, $S(r) = u$ and $v$ is a blank, into the successor state
$S^{[m]}$, which is identical to $S$ except
at the point $r$, where $S^{[m]}(r) = v$.

If there exists a (perhaps empty) sequence of simple moves, a
\emph{movement plan}, that transforms
$S$ into $S'$, we say that $S'$ is \emph{reachable} from $S$.  The
MAPF problem is then to decide whether $T$ is reachable from $I$.

The MAPF problem is often defined in terms of parallel or train-like movements
\cite{ryan:jair-08,surynek:aaai-10}, where one step consists of
parallel non-interfering moves of many agents. However, as long as we
are interested only in solution existence and polynomial bounds, there
is no difference between the MAPF problems with parallel and
sequential movements.  If we allow for {\em synchronous rotations}
\cite{standley:aaai-10,yu:lavalle:aaai-13,yu:rus:wafr-14}, where
one assumes that all agents in a fully occupied cycle can move
synchronously, things are a bit different. In this case, even if there
are no blanks, agents can move.  A \emph{rotation} is a set of simple
moves $M= \{ \langle r_1, u_1, v_1 \rangle,$ $\ldots$,
$\langle r_k, u_k, v_k \rangle\}$, where $k \geq 3$, with (1)
$\{u_i,v_i\} \in E$ for $1 \leq i \leq k$, (2) $u_i \neq u_j$ for
$1 \leq i < j \leq k$, (3) $v_{i} = u_{i+1}$ for $1 \leq i < k$, and
$v_k = u_1$. Such a rotation $M$ is executable in $S$ if
$S(r_i) = u_i$ for $1 \leq i \leq k$.  The successor state $S^{[M]}$ of a
given state $S$ is identical to $S$ except at the points
$r_1, \ldots, r_k$, where we have $S^{[M]}(r_i) = v_i$.

{\em Multi-agent pathfinding on directed graphs (diMAPF)} is similar
to MAPF, except that we have a directed graph and the moves have to
follow the direction of an arc, i.e., if there is an arc $(u,v) \in A$
but $(v,u) \not\in A$, then an agent can move from $u$ to $v$ but not
vice versa. Since on directed graphs there are also cycles of size
two, one may also allow rotations on only two nodes, contrary to the
definition of rotations on undirected graphs. In the following, we
consider both possibilities, allowing or prohibiting rotations of size
2, and show that it does not make a difference when it comes to bounding
movement plans polynomially. 

The hypothesis that we want to prove correct can now be formally stated.

\begin{hyp}[Short Solution Hypothesis for diMAPF on strongly connected
  digraphs]
  For each solvable diMAPF instance on strongly connected digraphs,
  there exists a movement plan of polynomial length.
\end{hyp}

When this hypothesis holds, then diMAPF is in NP, as follows from
Theorem 4 in the paper that established NP-hardness of diMAPF
\cite{nebel:icaps-20}.
The short justification for this
is that in each movement plan on a digraph, each agent only linearly
often enters and leaves strongly connected components. If the
movements inside each component can be polynomially bounded, then the
overall plan is polynomially bounded, i.e., can be guessed in
polynomial time. Together with the NP-hardness result  \cite[Theorem
1]{nebel:icaps-20}, it follows that diMAPF is NP-complete.

\section{DiMAPF Without Rotations}

As mentioned in the Introduction, one crucial observation is that any
simple move on strongly connected components can be undone. 
When considering cycle graphs, it is
obvious that it is always possible to restore a previous state.

\begin{proposition}
  \label{P:undo}
Let $S$ be a diMAPF state over the set of agents $R$ and the cycle
digraph $D=(V,A)$, and let $S'$ be a state reachable from $S$ with
simple moves, then one can reach
$S$ from $S'$ in at most $O(|V|^2)$ simple moves.
\end{proposition}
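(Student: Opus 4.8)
The plan is to exhibit an explicit ``continue rotating forward'' schedule that carries $S'$ back to $S$ and to bound its length by $O(|V|^2)$. Write $n = |V|$ and label the vertices $0,1,\dots,n-1$ so that the unique cycle is $0 \to 1 \to \cdots \to n-1 \to 0$; a simple move then takes an agent from $i$ to $i+1 \bmod n$ and requires $i+1 \bmod n$ to be blank. If $R$ occupies all of $V$ there are no blanks, no move is possible, hence $S' = S$ and there is nothing to do; thus I assume at least one blank. The first observation I would record is that agents can never overtake one another on a directed cycle, so the \emph{cyclic order} of the agents is an invariant of the dynamics: every reachable state, in particular $S'$, is obtained from $S$ by advancing the agents (each by some nonnegative number of steps) while preserving this order.

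Next I would fix the amount by which each agent must still advance. I lift the cycle to $\mathbb{Z}$ (with $x$ projecting to $x \bmod n$) and, reading the agents in their fixed cyclic order $a_1,\dots,a_m$, represent a state by strictly increasing integers lying in a window of length $< n$. Let $\vec{x}'$ be this lift of $S'$ and $\vec{y}$ the analogous lift of $S$; the lifts of $S$ reachable by \emph{forward} moves are exactly the full-turn shifts $\vec{y} + k n \mathbf{1}$, $k \in \mathbb{Z}$. I would take $\vec{T} = \vec{y} + k n \mathbf{1}$ for the least $k$ with $\vec{T} \ge \vec{x}'$ componentwise, and set the required advance of $a_i$ to $t_i = T_i - x_i' \ge 0$, so that on arrival every agent sits on its $S$-position. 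Minimality of $k$ forces $t_j < n$ for some coordinate $j$, and since both $\vec{x}'$ and $\vec{T}$ span a window of length $< n$, the differences $t_i - t_j$ stay in $(-2n, 2n)$; hence every $t_i < 3n$ and the total number of moves $\sum_i t_i$ is at most $3 n m = O(|V|^2)$, the desired bound.

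Finally I would show that the advance $\vec{T}$ is realizable, which I expect to be the crux. The schedule is greedy: while some agent is strictly behind its target, move any such agent whose successor vertex is currently blank. This never overshoots, since an agent at or past its target is never moved, so the potential $\sum_i (T_i - \mathrm{cur}_i)$ strictly decreases by one per move and reaches $0$ exactly at $S$. The only thing to rule out is a deadlock in which every behind-target agent is blocked. Here I would use the order constraint $T_i < T_{i+1}$: if a behind-target agent $a_i$ is blocked by $a_{i+1}$, then $a_{i+1}$ sits one step ahead of $a_i$ and hence also strictly below its own target, so it too is behind target; walking forward along a maximal block of occupied vertices, whose front agent necessarily has a blank ahead (such blocks exist because there is at least one blank), one reaches a behind-target agent that can move. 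Thus no deadlock occurs, the greedy schedule terminates at $S$, and its length is $O(|V|^2)$.
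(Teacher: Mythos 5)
Your proof is correct and follows essentially the same route as the paper's: the cyclic order of agents on a directed cycle is invariant under simple moves, so one returns to $S$ by continuing to rotate forward, with $O(|V|)$ moves per agent and hence $O(|V|^2)$ moves in total. Your version is considerably more careful than the two-sentence argument in the paper --- in particular, your lift-to-$\mathbb{Z}$ accounting (giving $t_i < 3n$) correctly handles the fact that an individual agent may need \emph{more} than $|V|-1$ forward moves (an agent already sitting on its $S$-position can be forced around a full extra turn by an agent behind it), a point the paper's ``maximal distance is $|V|-1$'' phrasing glosses over, and your greedy schedule with the deadlock-freeness argument supplies the realizability step the paper leaves implicit.
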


\begin{proof}
  Since the relative order of agents on a cycle cannot be changed by
  movements, one can always reach the initial state, regardless of
  what movements have been made in order to deviate from the initial
  state. Further, the maximal distance of an agent from its position
  in $S$ is $|V|-1$, which is the maximal number of moves the agent
  has to make to reach $S$. Since there are at most $|V|$ agents, the
  stated upper bound follows.
\end{proof}

If we now consider a simple move on a strongly
connected digraph, then it is obvious that we can restore the original
state, because all movements in such a graph take place on a cycle.

\begin{proposition}
  \label{P:inverse:move}
Let $S$ be a diMAPF state over $R$ and a strongly connected digraph  $D=(V,A)$ and
let $m=\langle r,u,v\rangle$ be a simple move to transform $S$ into
$S^{[m]}$. Then there exists a plan consisting of simple moves of length $O(|V|^2)$ to reach $S$ from
$S^{[m]}$. 
\end{proposition}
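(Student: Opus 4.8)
The plan is to reduce the claim to the cycle case already settled in Proposition~\ref{P:undo}. Since $D$ is strongly connected, there is a directed path from $v$ back to $u$ in $D$. As the move $m$ is legal on a simple digraph we have $u \neq v$, so prepending the arc $(u,v)$ to this path closes it into a genuine directed cycle $C = u \to v \to \cdots \to u$ whose vertices are distinct and whose arcs are all arcs of $D$. Thus $C$ induces a cycle sub-digraph of $D$ that contains the very arc $(u,v)$ traversed by $m$.

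Next I would observe that $m$ takes place entirely within $C$, since it moves agent $r$ from $u$ to $v$ and both vertices lie on $C$. Restricting attention to the agents occupying vertices of $C$, write $S_C$ and $S_C^{[m]}$ for the restrictions of $S$ and $S^{[m]}$ to $C$. A vertex $w \in C$ is blank in the full state exactly when it is blank in the restricted state, because each agent occupies at most one vertex and we are merely reading off the occupancy of $C$. Consequently, regarded as a move on the cycle digraph $C$, the single move $m$ transforms $S_C$ into $S_C^{[m]}$, so $S_C^{[m]}$ is reachable from $S_C$ using simple moves on $C$.

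I would then apply Proposition~\ref{P:undo} to the cycle digraph $C$: because $S_C^{[m]}$ is reachable from $S_C$, it yields a plan of at most $O(|C|^2)$ simple moves on $C$ that reaches $S_C$ from $S_C^{[m]}$, and since $|C| \leq |V|$ this bound is $O(|V|^2)$. Each such move is also legal in the full instance: its arc belongs to $D$, and its target vertex, lying on $C$, is blank in the full state precisely because it is blank in the restricted state. Moreover every move relocates agents only among vertices of $C$, so all agents outside $C$ — and hence all occupancies outside $C$ — remain untouched. Combined with the fact that the restricted configuration is driven back to $S_C$, the full state is driven back to $S$, as required.

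The only point needing care is this passage between the restricted cycle instance and the full instance, namely verifying that the notions of ``blank'' and of arc direction on $C$ agree in both views and that moves confined to $C$ never interfere with agents off $C$. Once this bookkeeping is in place, the bound follows immediately from Proposition~\ref{P:undo}, and I do not anticipate any genuine difficulty beyond it.
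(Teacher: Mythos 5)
Your proposal is correct and follows essentially the same route as the paper: close the arc $(u,v)$ into a directed cycle using strong connectivity, observe that $m$ is a move on that cycle, apply Proposition~\ref{P:undo} to restore the cycle's configuration in $O(|C|^2) \subseteq O(|V|^2)$ moves, and note that since no agent off the cycle is touched, restoring the cycle restores the whole state. The paper's proof is just a terser version of this, leaving implicit the bookkeeping about blanks and non-interference that you spell out.
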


\begin{proof}
  Each move $m$ in a strongly connected digraph is a move on a cycle in
  the digraph. Hence, we can apply Proposition~\ref{P:undo} and restore
  the original state $S$ using $O(v^2)$ simple moves, provided $v$ is
  the number of nodes in the cycle. Note that by restoring the
  configuration on the cycle, we restore the entire state. Further,
  since $v \leq |V|$, the claim about the plan length follows.  
\end{proof}

The plan of restoring the state before move $m$ is executed can actually be
seen as the ``inverse move'' $m^{-1}$, which moves an agent against
the direction of an arc! 

Although such a ``synthesized'' move against the arc direction is
costly---it may involve $O(|V|^2)$ simple moves---this opens up the
possibility to view a diMAPF instance on the digraph $D$ as a MAPF
instance on the underlying undirected graph ${\cal G}(D)$.\footnote{This had
already been noted by Ardizzoni et
al. \shortcite[Theorem~4.3]{ardizzoni:et:al:cdc-22}. However, the
proof appears to be incomplete, and the implication for plan length
had not been stated.} Because of the existence of inverse moves for
each possible simple move with only polynomial overhead, the next corollary is immediate.

\begin{corollary}
  \label{C:reduction}
  Let $\langle D,R,I,T\rangle$ be a diMAPF instance with $D$ a strongly
  connected digraph. Then the MAPF instance $\langle {\cal G}(D),R,I,T\rangle$ has a
  polynomial movement plan consisting of simple moves if and only if $\langle D,R,I,T\rangle$
  has a polynomial movement plan consisting of simple moves.
\end{corollary}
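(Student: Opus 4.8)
The plan is to prove the two directions separately, noting that one is essentially free while the other requires the inverse-move machinery of Proposition~\ref{P:inverse:move}. For the forward direction, suppose $\langle D,R,I,T\rangle$ has a polynomial plan of simple moves. Every simple move $\langle r,u,v\rangle$ used in that plan satisfies $(u,v)\in A$, hence $\{u,v\}$ is an edge of ${\cal G}(D)$, so the identical sequence of moves is also a legal simple-move plan for $\langle {\cal G}(D),R,I,T\rangle$ of the same length. This direction therefore introduces no blow-up at all.

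The interesting direction is the converse. Suppose $\langle {\cal G}(D),R,I,T\rangle$ has a simple-move plan $m_1,\ldots,m_\ell$ with $\ell$ polynomial. I would simulate it move by move in $D$, maintaining the invariant that after simulating a prefix, the diMAPF state equals the MAPF state reached after the same prefix. Consider a single move $m_i = \langle r,v,u\rangle$ applied in a state $S$, so that $\{u,v\}$ is an edge of ${\cal G}(D)$ and $u$ is blank in $S$. If $(v,u)\in A$, the move is already legal in $D$ and I reproduce it directly. If instead only $(u,v)\in A$, the move runs against the arc direction and cannot be performed directly in $D$; here I would invoke Proposition~\ref{P:inverse:move}.

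The key reindexing is to view the desired against-arc step as the \emph{undoing} of a forward step. Let $S'$ be the target state, in which $r$ sits at $u$ and $v$ is blank. Then $S$ is exactly the state obtained from $S'$ by the legal forward move $\langle r,u,v\rangle$, which is permitted in $D$ since $(u,v)\in A$ and $v$ is blank in $S'$. Applying Proposition~\ref{P:inverse:move} with $S'$ in the role of the starting state and $\langle r,u,v\rangle$ as the move, we obtain a plan of $O(|V|^2)$ simple moves in $D$ that reaches $S'$ from $S$. Since that plan restores the entire state except for the intended effect, it realizes precisely the transition $m_i$ while leaving every other agent where it was. Concatenating these per-move simulations yields a legal simple-move plan for $\langle D,R,I,T\rangle$ of length at most $\ell\cdot O(|V|^2)$, which is still polynomial.

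The main obstacle I anticipate is the bookkeeping around blanks and the ``leaves everything else unchanged'' guarantee: one must verify that the inverse-move plan of Proposition~\ref{P:inverse:move} indeed returns all agents other than $r$ to their original vertices, so that the simulated diMAPF state matches the intended MAPF state after each step and the invariant is preserved across the whole simulation. Because that proposition restores the complete configuration on the relevant cycle, this is immediate, and the remaining argument is a clean composition of a polynomial plan with a polynomial-overhead gadget.
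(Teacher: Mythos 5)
Your proposal is correct and matches the paper's intended argument: the paper declares the corollary immediate from Proposition~\ref{P:inverse:move}, i.e., arc-respecting moves are copied directly and against-arc moves are synthesized as the $O(|V|^2)$-step ``inverse'' of the legal forward move, exactly the reindexing you spell out. You have simply filled in the details (including the check that the gadget restores all other agents) that the paper leaves implicit.
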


Using Corollary~\ref{C:reduction} together with the result about an upper bound for
the plan length on undirected graphs from the paper by Kornhauser et
al. \shortcite[Theorem~2]{kornhauser:et:al:focs-84} gives us the first partial
result on the small solution hypothesis.

\begin{theorem}
  \label{T:ssh:simple}
The {\em Small Solution Hypothesis for diMAPF on strongly connected
  digraphs} is true, provided only simple moves are allowed.
\end{theorem}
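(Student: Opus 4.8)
The plan is to reduce the directed case entirely to the undirected case, using the two tools assembled just above. The first is Corollary~\ref{C:reduction}, which transfers polynomial solvability back and forth between the diMAPF instance $\langle D,R,I,T\rangle$ and the MAPF instance $\langle {\cal G}(D),R,I,T\rangle$ on the underlying graph. The second is the classical bound of Kornhauser et al.~\shortcite[Theorem~2]{kornhauser:et:al:focs-84}, which guarantees that every solvable MAPF instance on a connected undirected graph admits a movement plan of polynomial length. Chaining these two yields the claim almost immediately.

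First I would establish that solvability propagates from the directed to the undirected instance. If $\langle D,R,I,T\rangle$ is solvable, take any witnessing movement plan. Every simple move $\langle r,u,v\rangle$ legal on $D$ traverses an arc $(u,v)\in A$, hence the edge $\{u,v\}\in {\cal G}(D)$; so the very same sequence of moves is legal on ${\cal G}(D)$ and transforms $I$ into $T$ there as well. Thus the undirected instance is solvable. Since $D$ is strongly connected, its underlying graph ${\cal G}(D)$ is connected, so the hypotheses of Kornhauser's theorem are met and $\langle {\cal G}(D),R,I,T\rangle$ has a movement plan of polynomial length.

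Next I would invoke Corollary~\ref{C:reduction} in the direction ``${\cal G}(D)$ has a polynomial plan $\Rightarrow$ $D$ has a polynomial plan'' to conclude that $\langle D,R,I,T\rangle$ itself admits a polynomial-length plan of simple moves. The polynomial blow-up here is exactly the one folded into the corollary: each undirected move running against an arc is realized by its synthesized inverse, which by Proposition~\ref{P:inverse:move} costs $O(|V|^2)$ simple moves, and multiplying a polynomial plan length by this factor stays polynomial. This completes the argument.

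The step demanding the most care is the first one, namely pinning down that the solvability transfer is airtight and that Kornhauser's bound may be applied as a black box. In particular one must verify that \shortcite{kornhauser:et:al:focs-84} states its bound for precisely the notion of MAPF used here---injective placement, blanks, simple moves on a connected graph---rather than for a variant with different move semantics or side conditions on the number of agents or blanks. Once that is confirmed, nothing remains but bookkeeping: the reduction and the inverse-move construction are already in hand, so the polynomial bound for the directed case follows directly.
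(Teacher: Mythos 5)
Your proposal is correct and follows essentially the same route as the paper, which likewise derives the theorem directly from Corollary~\ref{C:reduction} combined with Kornhauser et al.'s polynomial plan-length bound (their Theorem~2). The only difference is that you spell out the solvability transfer from $D$ to ${\cal G}(D)$ and the role of Proposition~\ref{P:inverse:move} explicitly, which the paper leaves implicit.
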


\section{DiMAPF With Rotations Only}

As a next step, we will consider the case that the only kind of
movements are rotations. In this case, we can
unfortunately not reduce the reachability on the directed graph to
reachability on the underlying undirected graph. In order to see the
problem, consider the directed graph in Figure~\ref{F:basic}(a). The
cycle in the underlying undirected graph formed by the nodes 
$c_2$, $c_3$, $c_4$, $e_2$, $e_1$, $c_2$  could be used for a rotation on
the undirected graph, but there is no obvious way to emulate such a
rotation on the directed graph. In fact, there exist unsolvable diMAPF
instances such that the corresponding MAPF instance on the underlying
undirected graph is solvable.
For this reason, we will employ
permutation group theory in order to derive a polynomial upper
bound for plan length in this case.

\subsection{Cycle pairs}

Firstly, let us have a closer look at the structure of strongly
connected non-trivial digraphs that are not partially bidirectional cycle digraphs.
These can be decomposed into a \emph{basic
cycle} and one or more \emph{ears} \cite[Theorem~5.3.2]{bang-jensen:gutin:book-09}. This means that each such graph contains at least a sub-graph
consisting of a directed cycle  (such as, e.g., $c_1$, $c_2$, $c_3$, $c_4$,
$c_5$, $c_1$  in Figure~\ref{F:basic}, which is drawn
solidly), and an ear (such as, e.g., $c_2$,  $e_1$, 
$e_2$, $c_4$ drawn in a dotted
way). An \emph{ear} is  a directed
path or cycle that starts at some node of the basic cycle (in Figure~\ref{F:basic}(a), $c_2$) and ends at a 
node of the basic cycle ($c_4$), which however could be the same node. The ear could either be
oriented in the same direction as the basic cycle, providing a detour or
short-cut as in Figure~\ref{F:basic}(a), or it points back, as  in
Figure~\ref{F:basic}(b).

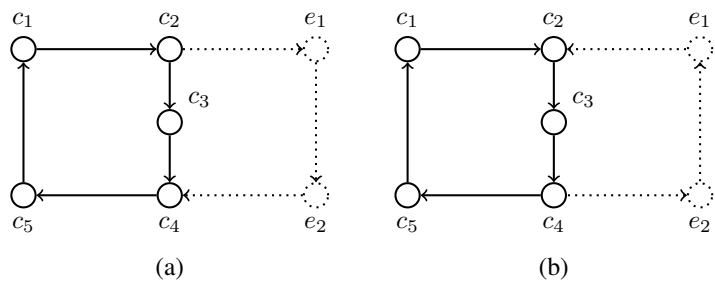
\begin{figure}[htb]
\centering
  \resizebox{\ifarxiv%
0.8\columnwidth%
\else%
\columnwidth\fi}{!}{
\begin{tikzpicture}[thick, main/.style = {draw, circle}] 
  \node[main] (c1) at (0,2) {};
  \node (lc1) [above of=c1,yshift=-6mm] {$c_1$};
  \node[main] (c5) at (0,0) {};
  \node (lc5) [below of=c5,yshift=6mm] {$c_{5}$};
  \node[main] (c2) at (2,2) {};
  \node (lc2) [above of=c2,yshift=-6mm] {$c_2$};
  \node[main] (c3) at (2,1) {};
  \node (lc3) [above right of=c3,yshift=-4mm,xshift=-3mm] {$c_3$};  
  \node[main] (c4) at (2,0) {};
  \node (lc4) [below of=c4,yshift=6mm] {$c_4$};
  \node[main,dotted] (e1) at (4,2) {};
  \node (le1) [above of=e1,yshift=-6mm] {$e_1$};
  \node[main,dotted] (e2) at (4,0) {};
  \node (le2) [below of=e2,yshift=6mm] {$e_{2}$};
  \draw[->] (c1) -- (c2);
  \draw[->] (c2) -- (c3);
  \draw[->] (c3) -- (c4);
  \draw[->] (c4) -- (c5);
  \draw[->] (c5) -- (c1);
  \draw[->,dotted] (c2) -- (e1);
  \draw[->,dotted] (e1) -- (e2);
  \draw[->,dotted] (e2) -- (c4);
  \node (a) [below of=c4] {(a)};
\end{tikzpicture}\qquad
\begin{tikzpicture}[thick, main/.style = {draw, circle}] 
  \node[main] (c1) at (0,2) {};
  \node (lc1) [above of=c1,yshift=-6mm] {$c_1$};
  \node[main] (c5) at (0,0) {};
  \node (lc5) [below of=c5,yshift=6mm] {$c_{5}$};
  \node[main] (c2) at (2,2) {};
  \node (lc2) [above of=c2,yshift=-6mm] {$c_2$};
  \node[main] (c3) at (2,1) {};
  \node (lc3) [above right of=c3,yshift=-4mm,xshift=-3mm] {$c_3$};  
  \node[main] (c4) at (2,0) {};
  \node (lc4) [below of=c4,yshift=6mm] {$c_4$};
  \node[main,dotted] (e1) at (4,2) {};
  \node (le1) [above of=e1,yshift=-6mm] {$e_1$};
  \node[main,dotted] (e2) at (4,0) {};
  \node (le2) [below of=e2,yshift=6mm] {$e_{2}$};
  \draw[->] (c1) -- (c2);
  \draw[->] (c2) -- (c3);
  \draw[->] (c3) -- (c4);
  \draw[->] (c4) -- (c5);
  \draw[->] (c5) -- (c1);
  \draw[<-,dotted] (c2) -- (e1);
  \draw[<-,dotted] (e1) -- (e2);
  \draw[<-,dotted] (e2) -- (c4);
  \node (a) [below of=c4] {(b)};
\end{tikzpicture}
}
\caption{Strongly connected digraphs consisting of a cycle and an ear}
\label{F:basic}
\end{figure}

In order to be able to deal with only one kind of cycle pair, it is
always possible to
view a graph as in Figure~\ref{F:basic}(a) as one in
Figure~\ref{F:basic}(b). This can be accomplished by considering the
outer, larger cycle as the basic cycle and the path with $c_2$, $c_3$,
$c_4$ as an ear, as illustrated in Figure~\ref{F:transformed}. Note
that in the extreme one may have an ear with no additional nodes.

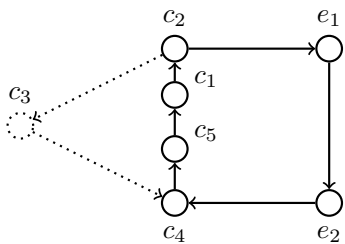
\begin{figure}[htb]
\centering
\resizebox{\ifarxiv%
0.4\columnwidth%
\else%
0.5\columnwidth\fi}{!}{
\begin{tikzpicture}[thick, main/.style = {draw, circle}] 
  \node[main] (c1) at (2,1.4) {};
  \node (lc1) [above right of=c1,yshift=-5mm,xshift=-3mm] {$c_1$};
  \node[main] (c5) at (2,0.7) {};
  \node (lc5) [above right of=c5,yshift=-5mm,xshift=-3mm] {$c_{5}$};
  \node[main] (c2) at (2,2) {};
  \node (lc2) [above of=c2,yshift=-6mm] {$c_2$};
  \node[main,dotted] (c3) at (0,1) {};
  \node (lc3) [above of=c3,yshift=-6mm] {$c_3$};  
  \node[main] (c4) at (2,0) {};
  \node (lc4) [below of=c4,yshift=6mm] {$c_4$};
  \node[main] (e1) at (4,2) {};
  \node (le1) [above of=e1,yshift=-6mm] {$e_1$};
  \node[main] (e2) at (4,0) {};
  \node (le2) [below of=e2,yshift=6mm] {$e_{2}$};
  \draw[->] (c1) -- (c2);
  \draw[->,dotted] (c2) -- (c3);
  \draw[->,dotted] (c3) -- (c4);
  \draw[->] (c4) -- (c5);
  \draw[->] (c5) -- (c1);
  \draw[->] (c2) -- (e1);
  \draw[->] (e1) -- (e2);
  \draw[->] (e2) -- (c4);
\end{tikzpicture}
}
\caption{Different perspective on graph from Figure~\ref{F:basic}(a)}
\label{F:transformed}
\end{figure}

This means that in a strongly connected digraph which is not a
bidirectional cycle, one can always find two connected directed cycles
of a particular form
as stated in the following proposition.

\begin{proposition}
  \label{P:normalize:SBC}
  Any strongly connected non-trivial digraph that is not a partially bidirectional
  cycle graph contains two directed cycles that share at least one
  node, where at least one cycle contains more nodes than the shared
  ones. Further, if one cycle contains only shared nodes, then there
  are at least three shared nodes.
\end{proposition}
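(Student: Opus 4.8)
The plan is to single out a \emph{longest} directed cycle $C_0$ in $D$ (one exists because $D$ is strongly connected and non-trivial) and to manufacture the second cycle from an ear attached to $C_0$, exactly in the spirit of the basic-cycle-plus-ears picture described above. Writing the second cycle as an ear followed by the stretch of $C_0$ that closes it up, the whole argument reduces to counting how the vertices of the two cycles split into shared and private ones. I would then split into two cases according to whether $C_0$ is spanning.

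First I would treat the case in which some vertex of $D$ lies off $C_0$. Strong connectivity then yields an ear $P$: a directed path from a vertex $s \in V(C_0)$ to a vertex $t \in V(C_0)$ all of whose internal vertices avoid $C_0$, and of which there is at least one. Let $Q$ be the directed sub-path of $C_0$ running from $t$ back to $s$, and set $C_1 = P\cdot Q$, which is a genuine directed cycle since $P$ and $Q$ meet only in $\{s,t\}$. Here the shared vertices are exactly $V(Q)$, the private vertices of $C_1$ are the internal vertices of $P$, and the private vertices of $C_0$ are the internal vertices of the complementary sub-path $Q'$ of $C_0$ from $s$ to $t$. Since $P$ has an internal vertex, $|V(P)| \geq 3$; the point of choosing $C_0$ \emph{longest} is that $|V(C_1)| \leq |V(C_0)|$ simplifies to $|V(Q')| \geq |V(P)| \geq 3$, so $Q'$ also has an internal vertex. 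Hence \emph{both} cycles have a private vertex, at least one cycle has more nodes than the shared ones, and the ``further'' premise does not apply.

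In the remaining case $C_0$ is spanning. Because $D$ is not a partially bidirectional cycle graph, it must carry an arc $(x,y)$ between two vertices of $C_0$ that is neither a forward arc of $C_0$ nor the reverse of a consecutive forward arc (a backward arc); for if every arc outside $C_0$ were such a backward arc, then $D$ would match the definition of a partially bidirectional cycle graph verbatim. Taking this arc as a vertex-free ear and letting $Q$ be the $C_0$-path from $y$ to $x$, I would form $C_1 = (x,y)\cdot Q$. Since $(x,y)$ is not a forward cycle arc, the $C_0$-path the other way from $x$ to $y$ has an internal vertex, so $C_0$ keeps a private vertex; since $(x,y)$ is not a backward arc, $Q$ has an internal vertex, so $|V(Q)| \geq 3$. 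As $V(C_1) = V(Q) \subseteq V(C_0)$, the cycle $C_1$ consists only of shared vertices, and there are at least three of them, which is precisely the ``further'' clause.

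The step I expect to be delicate is the spanning case: one has to pin down that ``not partially bidirectional'' really does deliver a chord that is simultaneously non-forward and non-backward, and then check that such a chord produces three shared vertices rather than the forbidden two. The complementary danger in the non-spanning case---an ear whose detour is long enough to leave $C_0$ with no private vertex---is exactly what the maximality of $C_0$ rules out, so fixing the order of the argument (choosing $C_0$ \emph{longest} before extracting the ear) is what makes the vertex bookkeeping go through.
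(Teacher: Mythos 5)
Your proof is correct, and it takes a more constructive route than the paper's. The paper argues abstractly: there must be at least two cycles (else the digraph is a partially bidirectional cycle graph), two of them must share a node by strong connectivity, one of them must have a private node ``because otherwise we would not have two cycles,'' and the final clause is settled by observing that exactly two shared nodes would yield a partially bidirectional cycle and one shared node a self-loop. You instead fix a \emph{longest} cycle $C_0$ and manufacture the second cycle explicitly, from an ear when $C_0$ is not spanning and from a non-forward, non-backward chord when it is. Your use of the maximality of $C_0$ to guarantee that $C_0$ retains a private vertex is a genuine improvement: the paper's step ``otherwise we would not have two cycles'' is loose, since two distinct directed cycles can have exactly the same vertex set, whereas your length comparison $|V(C_1)|\leq|V(C_0)|$ pins the bookkeeping down. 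Your spanning case mirrors the paper's final counting (two shared nodes would force a backward arc, one shared node a self-loop), but derives the existence of the required chord directly from the definition of a partially bidirectional cycle graph. The only points you leave implicit are minor: the degenerate ear with $s=t$ (where the shared set is a single node but both cycles still have private vertices, so the ``further'' clause is vacuous) and the routine extraction, from strong connectivity, of an ear with at least one internal vertex, which the paper outsources to the cited ear-decomposition theorem. Both are easily patched and do not affect correctness.
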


\begin{proof}
  There at least two cycles, since otherwise it is a partially bidirectional
  cycle graph. At least two of these cycle need to share at least one
  node, since the graph is strongly connected. One of these cycles
  needs to contain more than the shared nodes because otherwise
  we would not have two cycles. Finally, if one cycle contains only
  the shared nodes, then there needs to be at least three shared
  nodes, because otherwise the structure would be a partially bidirectional
cycle (in case of two shared nodes), or a non-simple digraph (in case
of only one shared node), which we excluded above.
\end{proof}

\subsection{Rotations as Permutations}

Rotations on cycles in the digraph will now be viewed as
permutations. Note that such permutations are cyclic permutations. If
we refer to such a cyclic permutation of degree $m$, we may call them
\emph{$m$-cycle}. If we refer to a graph cycle consisting of $n$
nodes, we will write \emph{cycle of size $n$}.  

Given a diMAPF instance $\langle
D,R,I,T\rangle$ on a strongly connected graph $D$ with no blanks, we
will view the sequence of rotations that transforms $I$ into $T$
as a sequence of permutations on $V$ that when composed permutes $I$ into
$T$. The \emph{permutation group rotation-induced by such an
instance} will also often be called \emph{permutation group rotation-induced by 
  $D$}, since the concrete sets $R$, $I$, and $T$ are not
essential for our purposes. By the one-to-one relationship between rotations and
permutations, it is obvious that a polynomial diameter of the
rotation-induced group implies that the length of movement plans can
be polynomially bounded.

In the following, we will also often use arguments about possible
movements of agents in order to prove that a particular permutation
exists.

On a fully occupied strongly connected digraph, it is possible to move any agent to
any node by using the right combination of rotations, i.e., the
rotation-induced permutation group is transitive. However, this
holds only, as long as we allow for rotations on directed cycles of size
two. If we require rotations to use at least 3 nodes, then the rotation-induced
group might not be transitive any longer (see
Figure~\ref{F:non-trans}). However, then there are \emph{transitive
components} (dashed circles in
Figure~\ref{F:non-trans}), in which each agent can reach each node,
but no other nodes, i.e., these transitive components are independent
of each other.

\begin{figure}[htb]
\centering\resizebox{\ifarxiv%
0.6\columnwidth%
\else%
0.8\columnwidth\fi}{!}{
\begin{tikzpicture}[thick,main/.style = {draw, circle}]
  \node[main] (1) at (0,0) {};
  \node[main] (2) at (1,1) {};
  \node[main] (3) at (1,-1) {};
  \node[main] (4) at (2,0) {};
  \draw[dashed] (4) circle [radius=14mm,xshift=-10mm];
  \draw[->] (1) -- (2);
  \draw[->] (2) -- (3);
  \draw[->] (3) -- (4);
  \draw[->] (4) -- (1);
  \draw[->] (3) -- (1);
  \draw[->] (2) -- (4);
  \node[main] (6) at (4,0) {};
  \node[main] (7) at (5,1) {};
  \node[main] (8) at (5,-1) {};
  \draw[dashed] (6) circle [radius=14mm,xshift=10mm];
  \draw[->] (6) -- (7);
  \draw[->] (7) -- (8);
  \draw[->] (8) -- (6);
  \draw[->] (4) to [bend left=30] (6);
  \draw[->] (6) to [bend right=330] (4);
  \end{tikzpicture}
} 
\caption{Non-transitive strongly connected digraph with transitive
  components---provided rotations are disallowed for cycles of size 2}
\label{F:non-trans}
\end{figure}
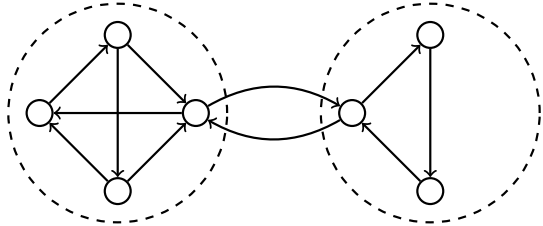

One can solve these transitive
components in isolation and then combine the respective permutations or
movement plans (as also done by Kornhauser et
al. \shortcite{kornhauser:et:al:focs-84}).
For this reason, it is enough to consider 
rotation-induced permutation groups that are transitive.

In order to show that the permutation groups rotation-induced by diMAPF
instances have polynomial diameter, we will use the following result by Driscoll and Furst
\shortcite[Theorem 3.2]{driscoll:furst:stoc-83}.

\begin{theorem}[Driscoll \& Furst]
  \label{T:diameter}
  If ${\mathbf G}$ is a primitive group containing a polynomially
  expressible 3-cycle, then the diameter of ${\mathbf G}$ is polynomially bounded.
\end{theorem}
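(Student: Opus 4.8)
The plan is to reduce the problem to exhibiting a generating set of short $3$-cycles and then exploit the rigid structure that a $3$-cycle forces on a primitive group. First I would invoke the classical theorem of Jordan: a primitive permutation group of degree $n$ that contains a $3$-cycle already contains the whole alternating group $\mathbf{A}_n$, so $\mathbf{G}$ is either $\mathbf{A}_n$ or $\mathbf{S}_n$. Next I would record the purely combinatorial fact that the standard $3$-cycles $(1\ 2\ i)$ for $3 \le i \le n$ generate $\mathbf{A}_n$ and that every element of $\mathbf{A}_n$ is a product of polynomially many of them (the Cayley diameter of $\mathbf{A}_n$ with respect to this generating set is polynomial). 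Hence it suffices to show that each standard $3$-cycle is itself polynomially expressible in terms of the given generators $g_1,\dots,g_s$; the odd part, if $\mathbf{G}=\mathbf{S}_n$, is then absorbed by composing with a single odd generator, which is $1$-expressible.

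The engine for manufacturing many short $3$-cycles out of the one hypothesized short $3$-cycle $\tau$ uses two operations whose cost in expressibility I can control. The first is conjugation: since $\mathbf{G}$ is transitive, a breadth-first search in the Schreier graph on the point set (vertices the points, edges $\{x, g_i(x)\}$) yields, for every point, a word of length $<n$ in the generators moving a chosen point to it; conjugating $\tau$ by such a word relabels its support at an additive cost of $O(n)$ in expressibility. The second is a splicing identity for $3$-cycles overlapping in two points, namely $(a\ b\ c)(a\ c\ d) = (a\ b\ d)$ (in the paper's left-to-right convention), which produces a new $3$-cycle whose expressibility is at most the sum of those of the two factors. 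Conjugation slides the support of $\tau$ around the orbit, and splicing exchanges the ``third point'' of a $3$-cycle sharing a fixed pair $\{a,b\}$, so repeatedly applying the two operations builds up $3$-cycles $(a\ b\ x)$ for more and more points $x$.

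The hard part is showing that this process reaches a full generating set while the accumulated expressibility stays polynomial, and this is exactly where primitivity is essential. Let $\Delta$ be the set of points that can serve as the third point of a polynomially expressible $3$-cycle through the fixed pair $\{a,b\}$. One must argue $\Delta = \Omega$: otherwise the reachability relation among points---being preserved by the very conjugations and splicings used to grow $\Delta$---would partition the point set into a nontrivial system of blocks, contradicting the primitivity of $\mathbf{G}$. The delicate bookkeeping is to organize the growth of $\Delta$ into polynomially many rounds, each round enlarging $\Delta$ (until it exhausts $\Omega$) at an additive cost of $O(n)$ to expressibility, so that every standard $3$-cycle ends up polynomially expressible in $n$ and in the polynomial $p(n)$ bounding $\tau$.

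Putting the pieces together: once every standard $3$-cycle of $\mathbf{A}_n$ is polynomially expressible and any element of $\mathbf{A}_n$ is a product of polynomially many of them, every even permutation is polynomially expressible; if $\mathbf{G} = \mathbf{S}_n$, some generator is odd, and any odd element equals that generator times an even one, hence is polynomially expressible as well. Therefore the diameter of $\mathbf{G}$ is polynomially bounded. I expect the main obstacle to be the primitivity/block argument together with the round-counting needed to keep the expressibility polynomial, rather than the elementary cycle algebra, which is routine.
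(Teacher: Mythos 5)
The paper does not prove this statement at all: it is imported verbatim as Theorem~3.2 of Driscoll and Furst (STOC~1983), with the accompanying Lemma~\ref{L:jordan} likewise cited from that paper, so there is no in-paper proof to compare yours against. Your outline does follow the same strategy as the original Driscoll--Furst argument: apply Jordan's theorem to conclude ${\mathbf G} \in \{{\mathbf A}_n, {\mathbf S}_n\}$, reduce to expressing the standard $3$-cycles $(1\ 2\ i)$ by short words (the Cayley diameter of ${\mathbf A}_n$ over these is $O(n)$, and an odd generator absorbs the ${\mathbf S}_n$ case), and manufacture short $3$-cycles from the given one by Schreier-word conjugation and the splicing identity $(a\ b\ c)(a\ c\ d) = (a\ b\ d)$, which is correct in the paper's left-to-right composition convention.

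The one genuine gap is exactly where you locate it, but your proposed repair does not work as stated. You define $\Delta$ as the set of third points of \emph{polynomially expressible} $3$-cycles through a fixed pair $\{a,b\}$ and claim that $\Delta \subsetneq \Omega$ would yield a nontrivial block system. A block system must be invariant under \emph{all} of ${\mathbf G}$, but conjugating a short $3$-cycle by an arbitrary $g \in {\mathbf G}$ (which may have no short word) does not produce a short $3$-cycle, so the ``short-reachability'' relation is preserved only by the short words you have already built, not by the full group action; its classes need not be blocks. The standard fix is to first work with the ${\mathbf G}$-invariant co-occurrence relation defined by \emph{all} conjugates of $\tau$ (whose components genuinely are blocks, forcing connectivity by primitivity), and then separately show that the conjugating elements actually needed to realize a connected spanning structure can be taken to be short --- e.g.\ via Schreier generators of the point stabilizer. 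That second step is the real content of the effective theorem and is what your round-counting sketch silently assumes; without it the accumulated expressibility bound is not justified. Everything else in your write-up (the cycle algebra, the $O(n)$ diameter of ${\mathbf A}_n$ over standard $3$-cycles, the parity bookkeeping) is routine and correct.
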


Incidentally, if the conditions of the Theorem are satisfied,  then
$ {\mathbf G} = {\mathbf A}_n$ or ${\mathbf G} = {\mathbf S}_n$, as follows
from a Lemma that is used in Driscoll and Furst's
\shortcite[Lemma 3.4]{driscoll:furst:stoc-83} paper.

\begin{lemma}
  \label{L:jordan}
A primitive group that contains a 3-cycle is either alternating or symmetric.
\end{lemma}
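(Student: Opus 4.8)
The plan is to prove the stronger statement that a primitive group $\mathbf{G} \leq \mathbf{S}_n$ containing a 3-cycle must already contain all of $\mathbf{A}_n$; the claim then follows at once, since the only subgroups sandwiched between $\mathbf{A}_n$ and $\mathbf{S}_n$ are $\mathbf{A}_n$ and $\mathbf{S}_n$ (as $\mathbf{A}_n$ has index two). This is the classical theorem of Jordan, and I would reconstruct it in three movements. First I would pass to the subgroup $\mathbf{N} \leq \mathbf{G}$ generated by \emph{all} 3-cycles lying in $\mathbf{G}$. Every conjugate of a 3-cycle is again a 3-cycle, so the set of 3-cycles of $\mathbf{G}$ is closed under conjugation by $\mathbf{G}$, whence $\mathbf{N}$ is normal in $\mathbf{G}$; and since a 3-cycle is a product of two transpositions, $\mathbf{N} \leq \mathbf{A}_n$. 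Because the 3-cycles generate $\mathbf{A}_n$, the entire task reduces to showing that $\mathbf{N}$ contains every 3-cycle on the ground set $X$, i.e.\ that $\mathbf{N} = \mathbf{A}_n$.

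Second, I would use primitivity to show that the 3-cycles of $\mathbf{G}$ are globally linked. Define a graph $\Gamma$ on the vertex set $X$ by joining two points whenever some 3-cycle of $\mathbf{G}$ moves both of them. Since $\mathbf{G}$ preserves its own set of 3-cycles, each $\sigma \in \mathbf{G}$ maps connected components of $\Gamma$ to connected components; hence every component $C$ satisfies $\sigma(C) = C$ or $\sigma(C) \cap C = \emptyset$, so $C$ is a block in the sense defined above. As $\mathbf{G}$ contains a 3-cycle, $\Gamma$ has an edge, so some component has at least two vertices and is thus a non-trivial block; primitivity forces it to be all of $X$, i.e.\ $\Gamma$ is connected.

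Third comes the propagation step. Its engine is the elementary computation that two 3-cycles sharing exactly one point generate the alternating group on the five points of their combined support: the product $(a\ b\ c)(c\ d\ e)$ is a 5-cycle on $\{a,b,c,d,e\}$, and a subgroup of $\mathbf{A}_5$ containing both a 5-cycle and a 3-cycle (hence of order divisible by $15$) must be all of $\mathbf{A}_5$. Starting from a single 3-cycle $(a\ b\ c) \in \mathbf{G}$, so that $\mathbf{A}_S \leq \mathbf{N}$ for $S = \{a,b,c\}$ (writing $\mathbf{A}_S$ for the alternating group on $S$), I would enlarge $S$ while $S \neq X$ as follows: connectivity of $\Gamma$ guarantees a 3-cycle $\tau = (x\ y\ z) \in \mathbf{G}$ with $x \in S$ and at least one of $y,z$ outside $S$; choosing two further points $w,u \in S \setminus \{x\}$ (possible since $|S| \geq 3$), the 3-cycle $(w\ x\ u) \in \mathbf{A}_S \leq \mathbf{N}$ shares only the point $x$ with $\tau$, so together they put $\mathbf{A}_{\{w,x,u,y,z\}}$ into $\mathbf{N}$, and combining with $\mathbf{A}_S$ yields $\mathbf{A}_{S \cup \{y,z\}} \leq \mathbf{N}$. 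Iterating until $S = X$ gives $\mathbf{A}_n = \mathbf{N} \leq \mathbf{G}$.

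The main obstacle is this third step: making the ``two overlapping 3-cycles generate $\mathbf{A}_5$'' lemma precise and verifying rigorously that $\mathbf{A}_S$ together with one crossing 3-cycle really produces $\mathbf{A}_{S \cup \{y,z\}}$, including the smallest base case $|S| = 3$ and the subcase where $\tau$ shares two points with $S$ (where the relevant auxiliary group is $\mathbf{A}_4$ rather than $\mathbf{A}_5$). Everything else is routine once primitivity has delivered the connectivity of $\Gamma$.
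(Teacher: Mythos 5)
Your argument is correct, but it is worth noting that the paper does not prove this lemma at all: it is stated as a quotation of a classical result (Jordan's theorem), imported via Lemma~3.4 of Driscoll and Furst, and no proof is given in the text. What you have written is a sound reconstruction of the classical proof of that theorem, and in fact of the stronger statement that a primitive group containing a $3$-cycle contains all of $\mathbf{A}_n$ --- which is exactly the form the paper silently relies on later, when it argues that the additional presence of an odd permutation forces the induced group to be all of $\mathbf{S}_n$. Your three movements are the standard ones and they hold up: the components of the graph $\Gamma$ are blocks because $\mathbf{G}$ permutes its own $3$-cycles, so primitivity makes $\Gamma$ connected; the product $(a\ b\ c)(c\ d\ e)$ is a $5$-cycle, and a subgroup of $\mathbf{A}_5$ of order divisible by $15$ must be $\mathbf{A}_5$ (no subgroup of order $15$ or $30$ exists); and the inductive enlargement of $S$ goes through, with the two subcases you flag ($\tau$ meeting $S$ in one or in two points, and the base case $|S|=3$) being routine verifications in $\mathbf{A}_4$ and $\mathbf{A}_5$. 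What the paper's citation buys is brevity; what your proof buys is a self-contained treatment and an explicit justification of the ``contains $\mathbf{A}_n$'' strengthening. The one detail to tidy up in a final write-up is that the normality of $\mathbf{N}$ established in your first movement is never actually used afterwards --- the block argument on $\Gamma$ and the propagation step do all the work --- so you can either drop it or replace it with the remark that it gives an alternative finish via the classical fact that a nontrivial normal subgroup of a primitive group is transitive.
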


It should be noted that primitiveness is implied by 2-transitivity, because
for a non-trivial block $Y$ there would exist one permutation that fixes one
element (staying in the block) and moves another element out of the
block, which contradicts that $Y$ is a block. 

\begin{proposition}
  \label{P:primitive}
  Every 2-transitive  permutation group is primitive.
\end{proposition}

In other words, it is enough to show 2-transitivity and the polynomial
expressibility of a 3-cycle in order to be able to apply
Theorem~\ref{T:diameter}, which enables us to  derive a polynomial bound for the
diameter.

Further, for demonstrating that a rotation-induced
group is the symmetric group, given 2-transitivity and a 3-cycle,
it suffices to show that the
rotation-induced group contains an odd permutation. This follows from the fact
that ${\mathbf A}_n$ contains only even permutations and
Lemma~\ref{L:jordan}. 
Note that in case the permutation group is the
symmetric group, it means that the diMAPF instance is
always solvable, a fact we will later use in the proof of Lemma~\ref{L:hybrid:movements}.

\subsection{2-Transitivity}

Almost all transitive permutation groups
rotation-induced by diMAPF instances on strongly connected digraphs are
2-transitive, as shown next. Intuitively, it means that we can move
any two agents to any two places in the digraph---moving perhaps
other agents around as well.

\begin{lemma}
  \label{L:2trans}
  Transitive permutation groups rotation-induced by strongly connected
  non-trivial digraphs
  that are not partially bidirectional cycle graphs are
  2-transitive.
\end{lemma}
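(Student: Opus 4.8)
The plan is to use the classical criterion that a transitive permutation group is $2$-transitive precisely when the stabilizer of a single point acts transitively on the remaining points. Writing $\mathbf{G}$ for the (by hypothesis transitive) rotation-induced group of the digraph $D=(V,A)$, it therefore suffices to fix one node $x_0$ and show that the point-stabilizer $\mathbf{G}_{x_0}$ is transitive on $V\setminus\{x_0\}$.

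First I would extract the gadget guaranteed by Proposition~\ref{P:normalize:SBC}: two directed cycles $C_1,C_2$ sharing a directed path $P=p_0\,p_1\cdots p_r$, where at least one cycle, say $C_1=p_0\cdots p_r\,a_1\cdots a_s$, carries extra nodes, while $C_2=p_0\cdots p_r\,b_1\cdots b_t$. Let $\rho,\sigma$ be the rotations on $C_1,C_2$. The key computation is that the composite $\tau:=\sigma\rho^{-1}$ fixes every node of $P$ except its last node $p_r$ and acts as the single cycle $(p_r\ b_1\cdots b_t\ a_s\cdots a_1)$ on the non-shared nodes together with $p_r$. Intuitively, $\tau$ rotates one cycle forward and the other backward so that the two traversals of the shared path cancel; it is a \emph{bypass} that shuffles the non-shared nodes while leaving the interior $p_0,\dots,p_{r-1}$ of $P$ untouched. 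Choosing $x_0:=p_0$ then guarantees $\tau\in\mathbf{G}_{x_0}$, so the whole non-shared part $\{p_r,a_1,\dots,a_s,b_1,\dots,b_t\}$ lies in a single orbit of $\mathbf{G}_{x_0}$.

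It remains to enlarge this orbit to all of $V\setminus\{x_0\}$. The two admissible building blocks inside $\mathbf{G}_{x_0}$ are (i) rotations of any directed cycle of $D$ that avoids $x_0$—each places all of its nodes in one $\mathbf{G}_{x_0}$-orbit—and (ii) the bypass $\tau$, which additionally lets an agent cross from one side of $x_0$ to the other without disturbing $x_0$. Using strong connectivity, every node of $V\setminus\{x_0\}$ is reachable from the gadget along a directed path, and I would translate such a path into a sequence of these two moves, routing through the bypass whenever the direct route would otherwise pass through $x_0$, thereby merging every node into the single large orbit. The shared-path nodes $p_1,\dots,p_{r-1}$ (present only when $r\ge 2$) would be absorbed by the same routing; and the degenerate configuration of Proposition~\ref{P:normalize:SBC} in which one cycle consists solely of shared nodes forces $r\ge 2$ but still leaves $\tau$ a genuinely nontrivial bypass, so it needs no separate treatment.

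The main obstacle is exactly this last routing step: converting ``$D$ is strongly connected'' into ``$\mathbf{G}_{x_0}$ is transitive,'' i.e.\ showing that a single designated agent can be held fixed while every other agent is relocated arbitrarily. The difficulty is that deleting $x_0$ from a merely strongly connected digraph can disconnect it, so one cannot simply rotate cycles avoiding $x_0$; the ear/bypass $\tau$ is what repairs this by providing a detour around $x_0$, and the argument must make rigorous that such detours, combined with strong connectivity, suffice to sweep every node into one orbit while controlling the (irrelevant, but necessarily tracked) side-effects on the other agents. I expect abstract transitivity together with the single cycle $\tau$ to be insufficient on its own—bootstrapping transitivity to $2$-transitivity genuinely requires the concrete digraph structure—so the crux is a careful constructive routing argument rather than pure permutation-group algebra.
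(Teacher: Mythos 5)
Your opening moves are sound: the stabilizer criterion for $2$-transitivity is the right abstract tool, and the element $\tau=\sigma\rho^{-1}$ does fix $p_0,\dots,p_{r-1}$ and act as the single cycle $(p_r\ b_1\cdots b_t\ a_s\cdots a_1)$; this is the same flavour of product computation the paper uses elsewhere (for $3$-cycles). But the step you yourself flag as the crux---showing that the point stabilizer $\mathbf{G}_{x_0}$ is transitive on $V\setminus\{x_0\}$---is where the argument breaks, and not merely for lack of detail. Your two building blocks (rotations of cycles avoiding $x_0$, plus the bypass $\tau$) each fix $x_0$ at \emph{every intermediate stage}, and the subgroup they generate is in general not transitive on $V\setminus\{x_0\}$. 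Concretely, attach to the cycle pair of Proposition~\ref{P:normalize:SBC} a third directed cycle $C_3=p_0\,q_1\cdots q_m\,p_0$ meeting it only in $x_0=p_0$. Every directed cycle through $q_1$ passes through $x_0$, so no rotation of type (i) touches $q_1$, and $\tau$ is supported on the gadget; hence $q_1$ is a fixed point of the subgroup you describe. Yet $\mathbf{G}_{x_0}$ does move $q_1$: with $\rho_3$ the rotation on $C_3$ and $g$ a gadget element that moves $p_0$, the product $\rho_3^{m}\,g\,\rho_3$ fixes $p_0$ overall while sending $q_1$ into the gadget---but it does so by \emph{temporarily displacing} $x_0$ and returning it at the end. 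Elements of this kind are exactly what your routing scheme excludes, so ``routing through the bypass whenever the direct route would pass through $x_0$'' cannot work as stated: off the gadget there may be no detour around $x_0$ at all (deleting $x_0$ can disconnect $D$, as you note), and $\tau$ only repairs this inside the gadget.

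The paper's proof confronts the same difficulty but resolves it by direct two-agent control rather than via the stabilizer: it fixes a target pair $(x,y)$ inside the cycle pair, moves an arbitrary agent $a_u$ to $x$ by transitivity, and then, while routing a second agent $a_v$ to $y$ (again by transitivity), it does \emph{not} hold $a_u$ fixed---it lets $a_u$ circulate inside the left cycle, pre-rotating that cycle whenever $a_u$ is about to be carried out of it, and only re-parks $a_u$ on $x$ at the very end, after $a_v$ has reached $y$ (which lies off the left cycle, so the final re-parking does not disturb it). That ``protect by moving, restore at the end'' idea is precisely the ingredient your proposal is missing; if you want to keep the stabilizer formulation, you would have to construct stabilizer elements that temporarily move $x_0$ and bring it back, which amounts to the same bookkeeping the paper performs.
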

\begin{proof}
  In order to prove this lemma, we will show that for two fixed nodes
  $x$ and $y$, it is possible to move any pair of agents $a_u$ and
  $a_v$ from the node pair $(u,v)$ to the node pair $(x,y)$. This
  implies that we also could move any pair $(a_{u'}, a_{v'})$ from
  $(u',v')$ to $(x,y)$. Composing the first plan with the inverse of
  the second plan means that we can move agents from any pair of nodes
  $(u,v)$ to any other pair of nodes $(u',v')$, which means the group
  is 2-transitive.
  
  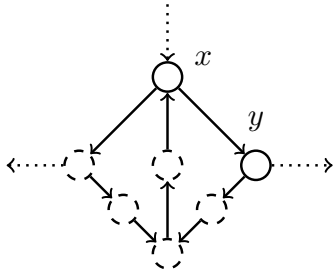
\begin{figure}[htb]
    \centering
    \resizebox{\ifarxiv%
 0.4\columnwidth%
\else%
0.5\columnwidth\fi}{!}{
  \begin{tikzpicture}[thick, main/.style = {draw, circle}] 
\node[main,dashed] (2) at (1,0) {};
\node[style={draw,dashed,circle}] (x0) at (1.5,-0.5) {};
\node[] (3) at (2,2) {}; 
\node[main] (4) at (2,1) {};
\node (u) [above right of=4,yshift=-5mm,xshift=-3mm] {$x$};
\node[style={draw,dashed,circle}] (5) at (2,-1) {};
\node[style={draw,dashed,circle}] (z0)  at (2,0) {};
\node[main] (7) at (3,0){};
\node (y) [above of=7,yshift=-5mm] {$y$};
\node[style={draw,dashed,circle}] (y0) at (2.5,-0.5) {};
\node (9) at (4,0){};
\draw[<-,dotted] (1) -- (2); 
\draw[<-] (2) -- (4); 
\draw[->] (2) -- (x0); 
\draw[->] (x0) -- (5); 
\draw[<-,dotted] (4) -- (3); 
\draw[->] (5) -- (z0); 
\draw[->] (z0) -- (4); 
\draw[->] (4) -- (7);
\draw[->] (7) -- (y0);
\draw[->] (y0) -- (5);
\draw[->,dotted] (7) -- (9);
\end{tikzpicture}
}
\caption{Strongly connected component containing at
  least two cycles: Demonstrating 2-transitivity}
\label{F:SBC:2trans}
\end{figure}

  By Proposition~\ref{P:normalize:SBC}, the digraph must contain at least two
  cycles, both containing at least 2 nodes, where the left cycle may
  consist of only shared nodes, 
  as depicted in
  Figure~\ref{F:SBC:2trans}.
  The dashed nodes signify possible
  additional nodes.
  The dotted arcs exemplify potential connections to
  other nodes in the digraph. 

  By transitivity, we can
  move any agent $a_u$ from node $u$ to node $x$.
  After that, we can move any agent
  $a_v$ from node $v$ to node $y$. This may lead to rotating the
  agent $a_u$ out of the left cycle. In order to prohibit that, we modify the movement plan as
  follows. As long as $a_v$ has
  not entered one of the two cycles yet, every time $a_u$ is
  threatened to be rotated out of the left cycle in the next move, we
  rotate the entire left cycle so that $a_u$ is moved to a node that
  will not lead to rotating $a_u$ out of the left cycle.
  
  If $a_v$ arrives in the right cycle (including the shared nodes), we rotate the right
  cycle iteratively. Whenever $a_u$ is placed on $x$ and $a_v$ has not
  yet arrived at $y$, we rotate on the left cycle. Otherwise, we stop
  and are done. When $a_v$ is
  placed on $x$, then in the next move, we rotate the right cycle and
  $a_v$ is placed on $y$. After that, we can rotate the left cycle
  until $a_u$ arrives at $x$. This is possible, because $a_u$ never
  leaves the left cycle.

  If $a_v$  arrives on nodes not belonging to the right cycle, we
  rotate the left cycle. When $a_v$ arrives at $x$, we rotate right and
  then we rotate left until $a_u$ arrives again at $x$.
\end{proof}

\subsection{3-Cycles}

The construction of
3-cycles will be shown by a case analysis over
the possible forms of two connected cycles. By
Proposition~\ref{P:normalize:SBC}, we know that every strongly
connected non-trivial digraph that is not a partially bidirectional cycle contains a subgraph as shown in
Figure~\ref{F:SBC:normal}.
  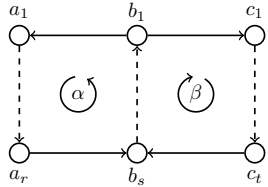
\begin{figure}[htb]
    \centering
\resizebox{\ifarxiv%
0.3\columnwidth%
\else%
0.5\columnwidth\fi}{!}{
\begin{tikzpicture}[thick, main/.style = {draw, circle}] 
  \node[main] (a1) at (0,2) {};
  \node (la1) [above of=a1,yshift=-6mm] {$a_1$};
  \node[main] (an1) at (0,0) {};
  \node (lan1) [below of=an1,yshift=6mm] {$a_{r}$};
  \node[main] (b1) at (2,2) {};
  \node (lb1) [above of=b1,yshift=-6mm] {$b_1$};
  \node[main] (bn2) at (2,0) {};
  \node (lbn2) [below of=bn2,yshift=6mm] {$b_{s}$};  
  \node (alpha) at (1,1) {$\alpha$};
  \circledarrow{black,thick}{alpha}{0.3cm};
  \node[main] (c1) at (4,2) {};
  \node (lc1) [above of=c1,yshift=-6mm] {$c_1$};
  \node[main] (cn3) at (4,0) {};
  \node (lcn3) [below of=cn3,yshift=6mm] {$c_{t}$};
  \node (beta) at (3,1) {$\beta$};
  \backcircledarrow{black,thick}{beta}{0.3cm};
   \draw[<-] (a1) -- (b1); 
  \draw[->,dashed] (a1) -- (an1);
  \draw[->] (an1) -- (bn2); 
  \draw[->,dashed] (bn2) -- (b1);
  \draw[->] (b1) -- (c1);
  \draw[->,dashed] (c1) -- (cn3);
  \draw[->] (cn3) -- (bn2);
\end{tikzpicture}}
\caption{Strongly connected component consisting of at
  least two cycles inducing two permutations}
\label{F:SBC:normal}
\end{figure}

We characterize such connected cycles by the three parameters $(r, s,
t)$ and will talk about \emph{cycle pairs of type $(r, s,
  t)$}, assuming wlg.{} $r \leq t$.
Below, we will show that for almost all cycle pairs
one can construct a
3-cycle, save for cycle pairs of type $(2,2,2)$ and $(1,3,2)$. These are the directed counterparts of
Kornhauser et al.'s \shortcite{kornhauser:et:al:focs-84} $T_0$-
and Wilson's \shortcite{wilson:jct-74} $\theta_0$-graph. We will
therefore call such cycle pairs \emph{$T_0$-pairs}.

\begin{lemma}
  \label{L:3cycle}
  Each transitive permutation group rotation-induced by a cycle pair that is not a $T_0$-pair
  contains a polynomially expressible 3-cycle.
\end{lemma}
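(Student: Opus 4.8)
The plan is to treat the two rotations of the cycle pair as the generators $\alpha$ and $\beta$ of the rotation-induced group, where, following Figure~\ref{F:SBC:normal}, $\alpha$ is the $(r+s)$-cycle running through $a_1,\ldots,a_r,b_s,b_{s-1},\ldots,b_1$ and $\beta$ is the $(s+t)$-cycle running through $b_1,c_1,\ldots,c_t,b_s,b_{s-1},\ldots,b_2$, the two cycles sharing exactly the directed path $b_s,b_{s-1},\ldots,b_1$ of $s$ vertices. Because every word I build will be a product of a bounded number of commutators and conjugates, each of length $O(n)$ in the generators (note that even if only forward rotations generate the group, $\alpha^{-1}=\alpha^{\,r+s-1}$ costs only $O(n)$ products, and similarly for $\beta$), any $3$-cycle obtained this way is automatically $O(n)$-expressible. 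Hence polynomial expressibility comes for free, and the only real work is to exhibit such a word.

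First I would compute an appropriate commutator of $\alpha$ and $\beta$ directly. A short calculation shows that it fixes every interior shared vertex $b_2,\ldots,b_{s-1}$, and that when $s=1$ it collapses to the single $3$-cycle $(a_1\ c_1\ b_1)$; this already settles every cycle pair of overlap $1$, and such pairs are never $T_0$-pairs. So the base case is immediate and the whole difficulty is concentrated in overlaps $s\ge 2$, where this commutator evaluates only to the double transposition $(a_1\ c_1)(b_{s-1}\ b_s)$, an even permutation of support $4$ rather than a $3$-cycle.

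For $s\ge2$ I would pursue two complementary reductions. The first converts the double transposition $\pi$ into a genuine $3$-cycle: if I can find a polynomially expressible $h$ in the group fixing three of the four points $a_1,c_1,b_{s-1},b_s$ and sending the fourth outside this set, then $\pi\cdot\pi^{h}$ telescopes to a single $3$-cycle. The second lowers the overlap: the outer cycle $\rho=\alpha\beta^{-1}$ runs through $a_1,\ldots,a_r,b_s,c_t,\ldots,c_1$ and fixes $b_1,\ldots,b_{s-1}$, so the pair $(\alpha,\rho)$ is again a cycle pair of two cyclic permutations sharing the consecutive run $a_1,\ldots,a_r,b_s$, now of overlap $r+1$; when $r+1<s$ this strictly decreases the overlap, and iterating drives the configuration toward the solved overlap-$1$ case (in particular the degenerate case $r=0$, where one cycle is the shared path itself and Proposition~\ref{P:normalize:SBC} forces $s\ge3$, falls out here, since then $\rho$ meets $\alpha$ in the single vertex $b_s$). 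I would organize the argument as a case analysis on $(r,s,t)$, using these two reductions to dispose of every pair having at least one large parameter and leaving only finitely many small pairs to check by explicit computation.

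The main obstacle is exactly this small-parameter endgame. The reductions above stall precisely when $r,s,t$ are all tiny, and a direct computation of the generated group on the six vertices of the types $(2,2,2)$ and $(1,3,2)$ shows that these are primitive yet contain no $3$-cycle at all---they are the directed analogues of Kornhauser's $T_0$- and Wilson's $\theta_0$-graph, whose exceptional groups act on six points without any $3$-cycle. Thus the exclusion of $T_0$-pairs in the statement is genuinely necessary rather than an artifact of the method. The delicate part of the write-up is therefore to verify that the case analysis is exhaustive, that every non-$T_0$ small pair really does admit an explicit $3$-cycle word, and that the overlap reduction terminates while keeping all intermediate words of length $O(n)$; by contrast, once any $3$-cycle word is in hand, its polynomial expressibility and the subsequent passage to a polynomial diameter bound via Theorem~\ref{T:diameter} are routine.
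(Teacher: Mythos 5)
Your skeleton is essentially the paper's: take the two cycle rotations $\alpha,\beta$ of Figure~\ref{F:SBC:normal} as generators, note that bounded-length words are automatically $O(n)$-expressible (inverses included), start from a commutator of $\alpha$ and $\beta$, observe that it is a $3$-cycle when the overlap is trivial and a double transposition otherwise, turn the double transposition into a $3$-cycle by multiplying it with a suitable conjugate of itself, and recognize the types $(2,2,2)$ and $(1,3,2)$ as genuine exceptions. The paper executes this as an explicit case analysis on $(r,s,t)$ in which each residual case is settled by a concrete word ($\chi$, $\xi$, $\zeta$); those explicit conjugators are the entire content of the proof, and your proposal defers exactly that part.

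Two concrete gaps remain. First, your overlap reduction via the outer cycle $\rho=\alpha\beta^{-1}$ sends a pair of type $(r,s,t)$ to one of type $(s-1,\,r+1,\,t)$, so it strictly decreases the overlap only when $r+1<s$; for $s=2$ and $r,t\ge 2$ it increases the overlap, it fixes $(1,2,t)$, and it cycles forever between $(1,3,t)$ and $(2,2,t)$. Hence an infinite family of non-$T_0$ pairs---precisely those the paper must treat with the $\xi$ and $\zeta$ constructions---is never driven to overlap $1$, and for all of them your first reduction has to supply the conjugating element $h$ that fixes part of the double transposition's support and moves one point out of it. You never exhibit such an $h$ or argue that it exists, and this is where all the work lies; for $(1,2,t)$ both of your reductions are idle and one must instead simply notice that $\alpha$ itself is a $3$-cycle. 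Second, the commutator value you assert for $s\ge 2$ is not correct: with the paper's conventions $\beta\,\alpha^{-1}\beta^{-1}\alpha=(b_s\ c_t)(a_1\ b_1)$, supported on the four junction vertices, not $(a_1\ c_1)(b_{s-1}\ b_s)$. This does not invalidate the plan, but it matters the moment you try to write down $h$, since $h$ must be tailored to the actual support. In short: the approach is the right one and the exceptional pairs are correctly identified, but the termination claim for the reduction is false as stated and the explicit words that make the lemma true are missing.
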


\begin{proof}
  The 3-cycles will be constructed
  from 
  $\alpha = (a_1 \ldots a_{r} b_{s} \ldots b_1)$ and
  $\beta = (c_1 \ldots c_{t} b_{s} \ldots b_1)$.  We prove the
  claim by case analysis over the parameters $(r, s, t)$ (see Fig.~\ref{F:SBC:normal}).
  \begin{description}
  \item[$(0,\_,\_)$:] 
    This implies   by
  Proposition~\ref{P:normalize:SBC} that $s \geq 3$ and
  $t \geq 1$ and we have
    $\beta\ \alpha^{-1}\ \beta^{-1}\ \alpha = (b_{s}\ c_{t}\
    b_1)$ as a 3-cycle.
  \item[$(\geq 1,1,\_)$:]  
    In this case, the same expression 
    delivers a slightly different 3-cycle:  $\beta\ \alpha^{-1}\ \beta^{-1}\ \alpha = (b_{1}\ a_{1}\
    c_{t})$.
    
  \item[$(1,2,\_)$:] 
    $\alpha$ is a 3-cycle.
  \item[$(1,\geq 3,1)$:]  
    $\alpha^{-1}\beta = (b_{s}\ a_1\ c_1)$ is the desired 3-cycle.
  \item[$(1,3,2)$:]  This is a $T_0$-pair, so there is nothing to
    prove here. It can be shown by exhaustive enumeration that
    the rotation-induced permutation group does not contain 3-cycles. 
  \item[$(1,3,\geq 3)$:]  
    $\beta\ \alpha^{-1}\ \beta^{-1}\
    \alpha = (b_{s}\ c_{t}) (b_1\
    a_{1})$. Consider now $\chi = \beta^2\ (\alpha^{-1}\
    \beta)^{2}\ \beta^{-2}$. This permutation fixes $b_{s}$ and
    $c_{t}$ and moves $c_{t-2}$ to $a_1$ and $a_1$ to $b_1$
    while also moving other things around. This means:
    \begin{align*}
      (\beta\ \alpha^{-1}\ \beta^{-1}\ \alpha)^{\chi^{-1}} & =
                                                             \chi\
                                                             \beta\
                                                             \alpha^{-1}\
                                                             \beta^{-1}\
                                                             \alpha\
                                                             \chi^{-1}
      \\
                                                           & = \chi (b_{s}\ c_{t}) (b_1\ a_{1}) \chi^{-1} \\
                                                           &=
                                                             (b_{s}\
                                                             c_{t})
                                                             (a_1\
                                                             c_{t-2}).
    \end{align*}
    Composing the result with the original permutation is now what
    results in a 3-cycle.\footnote{This construction is similar to one
      used by Kornhauser \shortcite{kornhauser:tr-84} in the
      proof of Theorem~1 for $T_2$-graphs.} That is, $\lambda = (\beta\ \alpha^{-1}\
    \beta^{-1}\ \alpha)^{\epsilon+\chi^{-1}}$ is the permutation, we looked for:
    \begin{align*}
      \lambda &= (\beta\ \alpha^{-1}\ \beta^{-1}\
                \alpha)^{\epsilon+\chi^{-1}} \\
      & = ((b_{s}\ c_{t}) (b_1\    a_{1})) \circ ((b_{s}\
        c_{t}) (b_1\    a_{1}))^{\chi^{-1}} \\
      & = ((b_{s}\ c_{t}) (b_1\    a_{1})) \circ ((b_{s}\
        c_{t}) (a_1\ c_{t-2})) \\
      & = (b_1\   c_{t-2}\ a_1).
    \end{align*}
    \item[$(1,\geq 4,\geq 2)$:]  
      In this case, $\xi = (\alpha\ \beta^{-1}\ \alpha^{-1}\
      \beta)^{\beta(\epsilon+\alpha^{-2})}$ is the claimed
      3-cycle:\footnote{This construction of a permutation as well as
        the ones for the cases further down are
        similar to one that has been used in a similar context by
        Bachor et al. \shortcite{bachor:et:al:aaai-23}.}
      \begin{align*}
        \xi & = (\alpha\ \beta^{-1}\ \alpha^{-1}\
              \beta)^{\beta(\epsilon+\alpha^{-2})} \\
        & = \left((b_{s}\ a_{r})(b_1\
          c_1)\right)^{\beta(\epsilon+\alpha^{-2})} \\
        & = \left((b_{s-1}\ a_{r})(c_1 \
          c_2)\right)^{\epsilon+\alpha^{-2}} \\
        & = ((b_{s-1}\ a_{r})(c_1 \
          c_2)) \circ ((b_{s-1}\ a_{r})(c_1 \
          c_2))^{\alpha^{-2}} \\
            & = ((b_{s-1}\ a_{r})(c_1 \
          c_2)) \circ ((b_2\ a_{r})(c_1 \
          c_2)) \\
            & = (b_{s-1}\ b_2\ a_{r}).
      \end{align*}
  \item[$(2,2,2)$:] This is the other case that is excluded in the
    claim and the same comment as in case $(1,3,2)$ applies.
  \item[$(2,\geq 3,2)$:]  
    For this case, the sought 3-cycle is $\zeta = (\beta\
    \alpha^{-1}\ \beta^{-1}\ \alpha)^{\alpha(\epsilon+\beta^{-2})}$:
      \begin{align*}
        \zeta &= (\beta\
                \alpha^{-1}\ \beta^{-1}\ \alpha)^{\alpha(\epsilon+\beta^{-2})} \\
              & = ((b_{s}\ c_{t}) (a_1\ b_1))^{\alpha(\epsilon+\beta^{-2})} \\
              &= ((b_{s-1}\ c_{t}) (a_1\ a_2))^{\epsilon+\beta^{-2}} \\
              & = ((b_{s-1}\ c_{t}) (a_1\ a_2)) \circ ((b_{s-1}\
                c_{t}) (a_1\ a_2))^{\beta^{-2}} \\
                & = ((b_{s-1}\ c_{t}) (a_1\ a_2)) \circ ((b_{1}\
                  c_{t}) (a_1\ a_2)) \\
                  & = (b_{s-1}\ b_1\   c_{t}).
        \end{align*}
      
      \item[$(\geq 2,\geq 2,\geq 3)$:]  
        Interestingly, the above product of basic permutations works for the general
        case, when a cycle pair is ``large enough,'' as well. Because of the
        different structure of the cycle pairs, the result is slightly
        different, though (differences are underlined):
      \begin{align*}
        \zeta &= (\beta\
                \alpha^{-1}\ \beta^{-1}\ \alpha)^{\alpha(\epsilon+\beta^{-2})} \\
              & = ((b_{s}\ c_{t}) (a_1\ b_1))^{\alpha(\epsilon+\beta^{-2})} \\
              &= ((b_{s-1}\ c_{t}) (a_1\ a_2))^{\epsilon+\beta^{-2}} \\
              & = ((b_{s-1}\ c_{t}) (a_1\ a_2)) \circ ((b_{s-1}\
                c_{t}) (a_1\ a_2))^{\beta^{-2}} \\
                & = ((b_{s-1}\ c_{t}) (a_1\ a_2)) \circ ((\underline{c_{t-2}}\
                  c_{t}) (a_1\ a_2)) \\
                  & = (b_{s-1}\  \underline{c_{t-2}}\   c_{t}).
        \end{align*}

      \end{description}
      This covers all possible cases. Note that when taking $\alpha$
      and $\beta$ as generators, then the inverses $\alpha^{-1}$ and
      $\beta^{-1}$ can be expressed by linearly many products. Since
      the expressions for all cases have constant length, in all cases
      the 3-cycles are linearly
      expressible. 
      So, the claim holds.
\end{proof}

In order to be able to  do away with $T_0$-pairs, we will assume that our digraphs
contain at least seven nodes.  For all smaller digraphs,  the
diameter of the rotation-induced permutation group is constant. One only has then
to show that strongly connected digraphs with seven or more nodes
admit for the generation of a 3-cycle. 

\begin{lemma}
  \label{L:not:t0}
  Each transitive permutation group rotation-induced by a strongly connected
  digraph with at least 7 nodes that is not a partially bidirectional cycle 
  contains a polynomially
  expressible 3-cycle.
\end{lemma}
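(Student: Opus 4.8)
The plan is to reduce everything to Lemma~\ref{L:3cycle}: it suffices to exhibit, inside any such digraph $D$, a single cycle pair that is \emph{not} a $T_0$-pair. The two cycles of any such pair are directed cycles of $D$, and since we are in the no-blank setting each is rotatable and hence a generator of the rotation-induced group; their inverses are expressible by linearly many products, so the constant-length word that Lemma~\ref{L:3cycle} builds from $\alpha^{\pm}$ and $\beta^{\pm}$ is a linearly (hence polynomially) expressible $3$-cycle of the full group. Thus the entire task is to \emph{locate} a non-$T_0$ cycle pair.

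First I would apply Proposition~\ref{P:normalize:SBC} to obtain a cycle pair of some type $(r,s,t)$. If it is not a $T_0$-pair we are done immediately, so assume it is of type $(2,2,2)$ or $(1,3,2)$. The decisive quantitative fact is that both exceptional types span exactly $r+s+t=6$ vertices, so the associated subgraph $H$ has six nodes; as $D$ has at least seven, some vertex $w\notin H$ exists. Using strong connectivity I would take a directed path from $H$ to $w$ and one from $w$ back to $H$ and splice them into a directed cycle $\gamma$ through $w$ that shares at least one node with $H$. Being a cycle of $D$, $\gamma$ is again a generator. Pairing $\gamma$ with a cycle of $H$ that it meets yields a fresh cycle pair whose vertex set contains all nodes of that $H$-cycle together with $w$, hence at least five vertices.

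I would then read off the type $(r',s',t')$ of this new pair. When the combined vertex set has five nodes (so $\gamma$ uses no node of $H$ outside the chosen cycle) the sum is $5\neq 6$, and when it has seven nodes the sum is $7\neq 6$; in both regimes the pair is automatically not a $T_0$-pair and Lemma~\ref{L:3cycle} applies—typically landing in an easy solved case such as $(1,2,\_)$, whose $\alpha$ is already a $3$-cycle. The main obstacle is the exceptional sub-case of a combined vertex set of exactly six nodes, where adjoining $w$ happens to reproduce a fresh $(2,2,2)$- or $(1,3,2)$-configuration; here a node count does not suffice. To handle it I would argue structurally within the union of the two $T_0$-configurations, which now carries at least seven named vertices together with the arcs forced by strong connectivity, and locate two directed cycles whose shared path and private arcs give parameters outside $\{(2,2,2),(1,3,2)\}$. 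Carrying this out is a finite enumeration over the position at which $\gamma$'s entry and exit arcs attach to $H$ (on the shared $b$-path, on the $a$-side, or on the $c$-side) and over the number of $H$-vertices that $\gamma$ traverses. What keeps the enumeration bounded—and what ultimately forces a non-$T_0$ pair to appear once a seventh vertex is present—is precisely the rigidity of the two exceptional types, each being a single, fully specified six-node digraph that any added vertex and incident arcs must perturb away from the forbidden parameters.
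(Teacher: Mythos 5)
Your overall strategy matches the paper's in outline: reduce to Lemma~\ref{L:3cycle} by locating a non-$T_0$ cycle pair, and when the pair produced by Proposition~\ref{P:normalize:SBC} is a $T_0$-pair, use the seventh vertex to build an additional cycle and analyze the resulting configurations. The gap is your concluding claim that ``the rigidity of the two exceptional types \ldots{} ultimately forces a non-$T_0$ pair to appear once a seventh vertex is present.'' That claim is false. The paper's exhaustive (SageMath-assisted) enumeration of all one-ear extensions of the two $T_0$-pairs finds exactly three non-isomorphic strongly connected digraphs on $7$ or $8$ nodes in which \emph{every} cycle pair is still a $T_0$-pair (Figure~\ref{F:t0ext}). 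For instance, attaching the path $b_2\to e_1\to c_2$ to the $(1,3,2)$-configuration creates only the new cycle $(b_2\ e_1\ c_2\ b_3)$, and each of its pairings with $\alpha$ or $\beta$ has type $(2,2,2)$ or $(1,3,2)$ again---precisely your ``exceptional six-node sub-case,'' which here cannot be escaped by any choice of cycles. In these three graphs no appeal to Lemma~\ref{L:3cycle} is possible; the proof must instead exhibit a $3$-cycle as an explicit word in \emph{three} generators, exploiting the rotation contributed by the new ear (e.g.\ $\beta\,\alpha\,\beta^{-1}\,\gamma^{-1}$ for case (a)). That construction is the essential content your proposal is missing, and without it the lemma is not proved.

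A secondary issue: you pair $\gamma$ with ``a cycle of $H$ that it meets'' and read off a type $(r',s',t')$, but the type---and the constructions of Lemma~\ref{L:3cycle}---are only defined when the two cycles share a single contiguous path oriented as in Figure~\ref{F:SBC:normal}. Two directed cycles can intersect in several disjoint segments, in which case the vertex-count argument $r'+s'+t'\neq 6$ does not apply as stated. This is repairable, but it needs to be addressed before the reduction to Lemma~\ref{L:3cycle} is legitimate.
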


\begin{proof}
  By Lemma~\ref{L:3cycle}, it is enough to prove the claim for
  digraphs that contain a $T_0$-pair. In order to do so, all possible
  extensions of $T_0$-pairs have to be analyzed. It turns out that it
  is sufficient to consider all extensions with one additional ear,
  since for all cases with one additional ear, we are able to identify
  a 3-cycle. In order to show that, we first check whether a new cycle
  pair is
  created that is not a $T_0$-pair, in which case Lemma~\ref{L:3cycle}
  is applicable. If the newly created cycle pairs are all $T_0$-pairs,
  one has to demonstrate that by the addition of the ear a new
  permutation is added that can be used to create a 3-cycle.

  Because the longest ear in $T_0$-pairs has a length of 2, we
  consider ears up to length 2. Adding a longer ear would result in a
  pair of type  $(\_,\_,\geq 3)$, which admits a
  3-cycle according to Lemma~\ref{L:3cycle}.  Since the $T_0$-pairs
  contain six nodes each, there are two different $T_0$ pairs, and we
  consider ears of length one and two,
  we need to analyze $6^2 \times 2 \times 2=144$ cases. This has
  been done using a \emph{SageMath} \cite{sagemath} script,
  which is listed in the appendix. This script identified three
  non-isomorphic extensions of the $(1,3,2)$- and $(2,2,2)$-type cycle
  pairs that contain only $T_0$-pairs as cycle pairs. These are shown
  in Figure~\ref{F:t0ext}.
 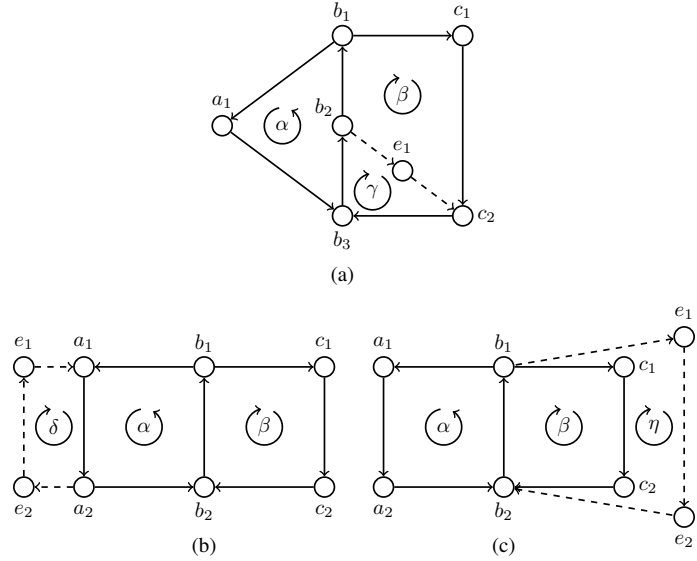
\begin{figure}[htb]
 \centering
\resizebox{\ifarxiv%
0.8\columnwidth%
\else%
\columnwidth\fi}{!}{
\begin{tabular}{cc}
  \multicolumn{2}{c}{
    \begin{tikzpicture}[thick, main/.style = {draw, circle}] 
  \node[main] (a1) at (0,0.5) {};
  \node (la1) [above of=a1,yshift=-6mm] {$a_1$};
  \node[main] (b1) at (2,2) {};
  \node (lb1) [above of=b1,yshift=-6mm] {$b_1$};
  \node[main] (b2) at (2,0.5) {};
  \node (lb2) [above left of=b2,xshift=4mm,yshift=-4mm] {$b_2$};
  \node[main] (b3) at (2,-1) {};
  \node (lb3) [below of=b3,yshift=6mm] {$b_3$};
  \node (alpha) at (1,0.5) {$\alpha$};
  \circledarrow{black,thick}{alpha}{0.3cm};
  \node[main] (c1) at (4,2) {};
  \node (lc1) [above of=c1,yshift=-6mm] {$c_1$};
  \node[main] (c2) at (4,-1) {};
  \node (lc2) [right of=c2,xshift=-6mm] {$c_2$};
  \node (beta) at (3,1) {$\beta$};
  \backcircledarrow{black,thick}{beta}{0.3cm};
   \draw[<-] (a1) -- (b1); 
  \draw[->] (a1) -- (b3); 
  \draw[->] (b3) -- (b2);
  \draw[->] (b2) -- (b1);
  \draw[->] (b1) -- (c1);
  \draw[->] (c1) -- (c2);
  \draw[->] (c2) -- (b3);
  \node[main] (d1) at (3,-0.25) {};
  \node (ld1) [above of=d1,yshift=-6mm] {$e_1$};
  \node (gamma) at (2.5,-0.6) {$\gamma$};
  \backcircledarrow{black,thick}{gamma}{0.3cm};
  \draw[->,dashed] (b2) -- (d1);
  \draw[->,dashed] (d1) -- (c2);
  \node (a) [below of=b3] {(a)};
\end{tikzpicture}} \\
\begin{tikzpicture}[thick, main/.style = {draw, circle}] 
  \node[main] (a1) at (0,2) {};
  \node (la1) [above of=a1,yshift=-6mm] {$a_1$};
  \node[main] (an1) at (0,0) {};
  \node (lan1) [below of=an1,yshift=6mm] {$a_2$};
  \node[main] (b1) at (2,2) {};
  \node (lb1) [above of=b1,yshift=-6mm] {$b_1$};
  \node[main] (bn2) at (2,0) {};
  \node (lbn2) [below of=bn2,yshift=6mm] {$b_2$};  
  \node (alpha) at (1,1) {$\alpha$};
  \circledarrow{black,thick}{alpha}{0.3cm};
  \node[main] (c1) at (4,2) {};
  \node (lc1) [above of=c1,yshift=-6mm] {$c_1$};
  \node[main] (cn3) at (4,0) {};
  \node (lcn3) [below of=cn3,yshift=6mm] {$c_2$};
  \node (beta) at (3,1) {$\beta$};
  \backcircledarrow{black,thick}{beta}{0.3cm};
   \draw[<-] (a1) -- (b1); 
  \draw[->] (a1) -- (an1);
  \draw[->] (an1) -- (bn2); 
  \draw[->] (bn2) -- (b1);
  \draw[->] (b1) -- (c1);
  \draw[->] (c1) -- (cn3);
  \draw[->] (cn3) -- (bn2);
  \node[main] (d2) at (-1,0) {};
  \node (ld2) [below of=d2,yshift=6mm] {$e_2$};
  \node[main] (d1) at (-1,2) {};
  \node (ld1) [above of=d1,yshift=-6mm] {$e_1$};
  \node (delta) at (-0.5,1) {$\delta$};
  \backcircledarrow{black,thick}{delta}{0.3cm};
  \draw[->,dashed] (an1) -- (d2);
  \draw[->,dashed] (d2) -- (d1);
  \draw[->,dashed] (d1) -- (a1);
  \node (b) [below of=bn2] {(b)};
\end{tikzpicture}
&
\begin{tikzpicture}[thick, main/.style = {draw, circle}] 
  \node[main] (a1) at (0,2) {};
  \node (la1) [above of=a1,yshift=-6mm] {$a_1$};
  \node[main] (an1) at (0,0) {};
  \node (lan1) [below of=an1,yshift=6mm] {$a_2$};
  \node[main] (b1) at (2,2) {};
  \node (lb1) [above of=b1,yshift=-6mm] {$b_1$};
  \node[main] (bn2) at (2,0) {};
  \node (lbn2) [below of=bn2,yshift=6mm] {$b_2$};  
  \node (alpha) at (1,1) {$\alpha$};
  \circledarrow{black,thick}{alpha}{0.3cm};
  \node[main] (c1) at (4,2) {};
  \node (lc1) [right of=c1,xshift=-6mm] {$c_1$};
  \node[main] (cn3) at (4,0) {};
  \node (lcn3) [right of=cn3,xshift=-6mm] {$c_2$};
  \node (beta) at (3,1) {$\beta$};
  \backcircledarrow{black,thick}{beta}{0.3cm};
   \draw[<-] (a1) -- (b1); 
  \draw[->] (a1) -- (an1);
  \draw[->] (an1) -- (bn2); 
  \draw[->] (bn2) -- (b1);
  \draw[->] (b1) -- (c1);
  \draw[->] (c1) -- (cn3);
  \draw[->] (cn3) -- (bn2);
  \node[main] (d2) at (5,-0.5) {};
  \node (ld2) [below of=d2,yshift=6mm] {$e_2$};
  \node[main] (d1) at (5,2.5) {};
  \node (ld1) [above of=d1,yshift=-6mm] {$e_1$};
  \node (eta) at (4.5,1) {$\eta$};
  \backcircledarrow{black,thick}{eta}{0.3cm};
  \draw[<-,dashed] (d2) -- (d1);
  \draw[<-,dashed] (d1) -- (b1);
    \draw[<-,dashed] (bn2) -- (d2);
  \node (c) [below of=bn2] {(c)};
\end{tikzpicture}
\end{tabular}
}
\caption{Extensions of $T_0$-pairs containing only $T_0$-pairs}
\label{F:t0ext}
\end{figure}

It is now an easy exercise to identify 3-cycles for these cases:
\begin{description}
\item[(a):]  $\beta\ \alpha\ \beta^{-1}\ \gamma^{-1}$,
\item[(b):]  $\beta^{-1}\ \delta^{-1}\ \alpha^{-1}\ \beta^2\ \delta^{-1}\ \alpha^{-1}\ \delta^{-1}$,
\item[(c):] $\alpha\ \beta\ \eta\ \alpha^{-1}\ \beta^{-1}\ \eta^{-1}$.
\end{description}
So, the claim holds for all cases.
\end{proof}

The above results enable us now to prove the claim that the rotation-induced
permutation groups have a polynomial diameter.

\begin{lemma}
  \label{L:poly:diameter}
  Each transitive permutation group rotation-induced by a strongly connected
  digraph  has a polynomial diameter.
\end{lemma}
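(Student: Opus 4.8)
The plan is to reduce everything to a single application of the Driscoll--Furst bound (Theorem~\ref{T:diameter}), dispatching the few structural exceptions by hand. The engine of the argument is that, once a rotation-induced group $\mathbf{G}$ is known to be 2-transitive and to contain a polynomially expressible 3-cycle, then $\mathbf{G}$ is primitive by Proposition~\ref{P:primitive}, so Theorem~\ref{T:diameter} immediately yields a polynomial diameter. Accordingly, I would organize the proof as a case analysis over the structure of the underlying strongly connected digraph $D=(V,A)$, arranged so that the hypotheses of Lemma~\ref{L:2trans} and Lemma~\ref{L:not:t0} are met in the principal case while the residual structures are handled directly.

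First I would dispose of the generic case: $D$ non-trivial, not a partially bidirectional cycle graph, and $|V| \geq 7$. Here Lemma~\ref{L:2trans} gives 2-transitivity, hence primitivity via Proposition~\ref{P:primitive}, while Lemma~\ref{L:not:t0} supplies a polynomially expressible 3-cycle; Theorem~\ref{T:diameter} then closes the case. The remaining non-trivial, non-partially-bidirectional digraphs have $|V| \leq 6$, so $\mathbf{G} \leq \mathbf{S}_{|V|}$ has at most $6! = 720$ elements and its diameter is bounded by a constant, independent of the instance size. This is precisely why Lemma~\ref{L:not:t0} only needed to be established for $|V|\geq 7$: the $T_0$-pairs that resist the 3-cycle construction live on small digraphs, whose groups are finite of bounded order anyway.

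The only structures excluded from Lemmas~\ref{L:2trans} and~\ref{L:not:t0} are the trivial digraph and partially bidirectional cycle graphs, which I would treat separately, and I expect the latter to be the main obstacle. For the trivial digraph $\mathbf{G}$ is itself trivial. For a partially bidirectional cycle graph the behaviour depends on whether synchronous rotations of size $2$ are permitted. If they are not, or if $D$ is a pure cycle digraph with no backward arcs, then the only available rotation is the full-cycle rotation $\rho$, so $\mathbf{G} = \langle \rho \rangle$ is cyclic of order $|V|$ and its diameter is at most $|V|-1$; note that $\rho$ already acts transitively, so the transitivity hypothesis is met. If size-$2$ rotations are allowed and a backward arc is present, then that arc yields a transposition $\tau$ of two cycle-adjacent nodes; since a full cycle together with an adjacent transposition generates the whole symmetric group, $\mathbf{G} = \mathbf{S}_{|V|}$. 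Conjugating $\tau$ by powers of $\rho$ produces every adjacent transposition with only linearly many products, and the product of two neighbouring ones is a linearly expressible 3-cycle; as $\mathbf{S}_{|V|}$ is $2$-transitive and therefore primitive, Theorem~\ref{T:diameter} again gives a polynomial diameter. Since these cases are exhaustive, the diameter is polynomially bounded in every case.
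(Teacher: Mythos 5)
Your proof is correct and follows essentially the same case split as the paper's: small digraphs dispatched by a constant bound, partially bidirectional cycles handled directly (cyclic group if only the full rotation is available, $\mathbf{S}_n$ if a size-2 rotation on a backward arc is permitted), and everything else via Lemma~\ref{L:2trans} and Lemma~\ref{L:not:t0} feeding into Theorem~\ref{T:diameter}. The only cosmetic difference is that for the partially bidirectional cycle with a backward arc and size-2 rotations the paper gives a direct bubble-sort bound of $O(n^2)$, whereas you reroute that subcase through Driscoll--Furst; both are valid.
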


\begin{proof}
  We prove the claim by case analysis. Let $n$ be the number of nodes.
  \begin{enumerate}
  \item \emph{$n < 7$:} There exist only
    finitely many permutation groups rotation-induced by such graphs. The diameter is
    therefore $O(1)$ in this case.

  \item \emph{$n \geq 7$:}
    \begin{enumerate}
          \item \emph{The digraph is a partially bidirectional
              cycle of $n$ nodes:} If rotations of size 2 are
            disallowed, then each possible permutation can be expressed by
            at most $O(n)$ compositions of the permutation
            corresponding to the rotation of all
            agents by one place. If rotations of size 2 are permitted
            and the cycle has at least one backward-pointing arc, 
            the rotation-induced group is ${\mathbf S}_n$ and any
            cyclic order is achievable using $O(n^2)$ swaps and
            rotations on the cycle, similar
            to the way bubble sort works.
\item
  \emph{The digraph is a strongly connected digraph that is not
    a partially bidirectional cycle}: For this case, we use
  Theorem~\ref{T:diameter}, i.e., 
  it is enough to show that a permutation group is 2-transitive and contains a
  polynomially expressible
  3-cycle. By Proposition~\ref{P:normalize:SBC}, we know that the
  digraph contains a cycle pair. Now, 2-transitivity follows from
  Lemma~\ref{L:2trans}. The existence of a polynomially expressible 3-cycle follows from
  Lemma~\ref{L:not:t0}. So, in this case, the
  claim holds as well. 
\end{enumerate}
\end{enumerate}
This covers all possible cases, so the claim holds.
\end{proof}

With that, another partial
result on the small solution hypothesis follows.

\begin{theorem}
  \label{T:ssh:rotation}
The {\em Small Solution Hypothesis for diMAPF on strongly connected
  digraphs} is true, provided only rotations are allowed.
\end{theorem}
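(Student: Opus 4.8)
The plan is to reduce the statement — that every solvable rotation-only diMAPF instance on a strongly connected digraph admits a polynomial-length plan — to the polynomial-diameter result already established in Lemma~\ref{L:poly:diameter}, via the one-to-one correspondence between rotations and cyclic permutations that was set up in the subsection ``Rotations as Permutations.'' The essential observation is that a movement plan consisting of rotations corresponds exactly to a word in the generators of the rotation-induced permutation group, one generator per rotation applied; consequently, a plan that transforms the initial state $I$ into the target state $T$ corresponds to a composition of these generators that realizes the permutation carrying $I$ to $T$. Thus bounding the plan length is the same as bounding the number of generators needed to express that permutation, which is precisely what the diameter controls.

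First I would reduce to the case of no blanks and to transitive components. Since we are allowing rotations only, an agent can move only as part of a fully occupied cycle; blanks are never involved in a rotation, so we may restrict attention to the subgraph on the occupied nodes, and — as noted in the text following Figure~\ref{F:non-trans} — we may further decompose into the transitive components and solve each in isolation, combining the resulting plans. Because there are at most $n$ such components and the plans are concatenated, a polynomial bound on each yields a polynomial bound overall. This justifies assuming, as Lemma~\ref{L:poly:diameter} does, that the rotation-induced group is transitive.

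Next I would invoke Lemma~\ref{L:poly:diameter} directly: every transitive permutation group rotation-induced by a strongly connected digraph has polynomial diameter, say bounded by some polynomial $p(n)$. Given a solvable instance, solvability means exactly that the permutation $\pi$ mapping $I$ to $T$ lies in the rotation-induced group $\mathbf{G}$. By the definition of diameter, $\pi$ is $p(n)$-expressible, i.e.\ it can be written as a product of at most $p(n)$ generators, where each generator is a single rotation (a cyclic permutation corresponding to one directed cycle in $D$). Translating this word back into the MAPF setting yields a movement plan using at most $p(n)$ rotations, hence of polynomial length, which establishes the theorem.

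The main obstacle I anticipate is not the diameter bound itself — that is handed to us by Lemma~\ref{L:poly:diameter} — but making the back-and-forth translation between plans and group words fully rigorous, in particular handling the generator set correctly. The diameter is generator-dependent, so one must confirm that the generators used in Lemma~\ref{L:poly:diameter} are exactly the rotations available in the instance (including the subtlety of whether size-2 rotations are permitted, which the text flags as affecting transitivity and whether the group is $\mathbf{S}_n$ or $\mathbf{A}_n$), and that each generator corresponds to a single executable rotation rather than a synthesized composite move. Provided the generator set is the set of admissible single rotations on directed cycles of $D$, the correspondence is clean and the theorem follows, as the remaining verification is routine.
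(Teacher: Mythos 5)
Your proposal is correct and follows essentially the same route as the paper: decompose into transitive components, apply Lemma~\ref{L:poly:diameter} to get a polynomial diameter for each component's rotation-induced group, and concatenate the resulting plans, using the one-to-one correspondence between rotations and generators to translate the group word back into a movement plan. The extra care you take about blanks and the generator set is sensible but not something the paper's own (terser) proof spells out.
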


\begin{proof}
Decompose the graph into its transitive components, i.e., the components
on which the rotation-induced permutation group is transitive. The only way to
solve such an instance is to solve the instance for each
transitive component in isolation and afterward combine the
result---if possible. Each transitive component has a polynomial
solution because by Lemma~\ref{L:poly:diameter} each rotation-induced
group has polynomial diameter, so the combined solution will be
polynomial.  
\end{proof}

\section{DiMAPF: The General Case}

Finally, we will consider the case that simple moves as well as
rotations are permitted. In order to be able to apply permutation
group theory, we will initially restrict ourselves to diMAPF instances
on strongly connected digraphs $\langle D,R,I,T\rangle$ such that the
set of occupied nodes is identical in the initial and the goal state,
i.e., $I(R) = T(R)$, which can be viewed as permutations on the set of
nodes $V$.  This restriction is non-essential since one can
polynomially transform a general diMAPF instance to such a restricted
instance, as shown in Corollary~\ref{C:hybrid:reduction} below.

\begin{lemma}
  \label{L:hybrid:reduction}
  Given a diMAPF instance $\langle D,R,I,T\rangle$, with $D$ a
  strongly connected digraph, an instance  $\langle D,R,I,T'\rangle$
  can be computed in polynomial time such that $I(R) = T'(R)$, and
  $\langle D,R,T,T'\rangle$ and $\langle D,R,T',T\rangle$ are both
  solvable using plans of polynomial length.
\end{lemma}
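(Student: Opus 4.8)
The plan is to build $T'$ by relocating only the \emph{blanks} of $T$, treating the identities of the agents as immaterial, since the lemma constrains only the occupied set $T'(R)$. The starting observation is a counting one: because $|T(R)| = |I(R)| = |R|$, the states $T$ and any candidate $T'$ must have the same number of blanks, namely $k = |V| - |R|$. Hence the task ``reach a state whose occupied set is $I(R)$'' is exactly the task ``move the $k$ blanks from the set $V \setminus T(R)$ to the set $V \setminus I(R)$'', and these two blank sets have equal cardinality, so a feasible reconfiguration exists in principle.

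To carry this out with a polynomial number of moves I would not work on $D$ directly but invoke Corollary~\ref{C:reduction}: it suffices to produce a polynomial plan of simple moves on the \emph{undirected} graph $\mathcal{G}(D)$, which is connected, and the corollary then guarantees a corresponding polynomial plan on $D$. On a connected undirected graph the blanks can be shuffled freely. Concretely, I would fix a spanning tree of $\mathcal{G}(D)$ and process its nodes leaf by leaf: for a current leaf $v$, if $v$ is required to be blank (i.e.\ $v \in V \setminus I(R)$) I route some existing blank to $v$ along the tree path, shifting the agents on that path by one; if $v$ is required to be occupied I route any blank currently on $v$ away from it into the remaining tree. In either case $v$ attains its final status in $O(|V|)$ simple moves, after which $v$ is ``frozen'' and the recursion continues on the smaller tree, which is again a tree. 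Since later operations never route through a frozen leaf, finalized nodes are never disturbed; with $|V|$ leaves processed at $O(|V|)$ moves each, this costs $O(|V|^2)$ simple moves and is clearly computable in polynomial time. The resulting state is our $T'$; by construction its blank set is $V \setminus I(R)$, so $T'(R) = I(R)$.

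It remains to certify both solvability claims. The leaf-processing yields a polynomial plan of simple moves on $\mathcal{G}(D)$ from $T$ to $T'$, so by Corollary~\ref{C:reduction} the instance $\langle D,R,T,T'\rangle$ admits a polynomial plan; reversing the undirected plan (undirected simple moves are trivially invertible) gives a polynomial plan from $T'$ to $T$ on $\mathcal{G}(D)$, and Corollary~\ref{C:reduction} again---or, directly on $D$, the inverse-move construction of Proposition~\ref{P:inverse:move}---provides a polynomial plan for $\langle D,R,T',T\rangle$. The main work, and the only delicate point, is the blank-relocation step: one must argue that finalizing nodes one at a time can always be done locally and never forces a previously placed blank out of position. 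The leaf-freezing discipline is exactly what makes this go through, and the fact that agent identities are irrelevant (only the blank pattern matters) is what keeps the move count quadratic rather than forcing a full permutation argument here.
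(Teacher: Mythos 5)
Your proof is correct, and its core idea---construct $T'$ by relocating the blanks of $T$ onto the blank positions of $I$---is exactly the paper's. The realization differs, though. The paper works directly on $D$: it pairs each blank of $T$ with a blank of $I$ and walks each blank backwards along a directed path, which requires only \emph{forward} agent moves, so $T'$ is reached from $T$ in $O(n^2)$ genuine directed moves; Proposition~\ref{P:inverse:move} is invoked only for the return trip $T'\to T$, giving $O(n^4)$. You instead detour through the undirected graph: you build an $O(|V|^2)$ plan on ${\cal G}(D)$ by a spanning-tree, leaf-freezing argument and transfer it to $D$ in both directions via Corollary~\ref{C:reduction}. Both routes are sound and polynomial. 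Yours pays the $O(|V|^2)$-per-move simulation cost in both directions rather than one, but in exchange it makes explicit a point the paper's proof leaves implicit, namely why routing one blank never dislodges an already-placed one (your freezing invariant, together with the counting argument that the unfrozen region always contains enough blanks and enough agents). One small caution: Corollary~\ref{C:reduction} is phrased as an equivalence of plan existence for a fixed pair of states, so you should add one sentence observing that the simulated directed plan terminates in exactly the same state $T'$ as the undirected plan that defines it---which holds because the synthesized inverse moves of Proposition~\ref{P:inverse:move} restore states exactly---so that the $T'$ you output is well defined independently of which graph the plan is executed on.
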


\begin{proof}
  In order to construct $\langle D,R,I,T'\rangle$, generate an
  arbitrary mapping from blanks in $T$, the
  \emph{source nodes}, to blanks in $I$, the {\em
    target nodes}. Then ``move'' 
  the blanks from the source nodes to the target nodes (against the
  direction of the arcs) by moving the appropriate agents. This is always
  possible because $D$ is strongly connected.

  The new configuration is $T'$ and clearly $T'(R)=I(R)$. Further,
  $T'$ is obviously reachable from $T$ in at most $O(n^2)$ simple moves, $n$
  being the number of nodes.

  Reaching $T$ from $T'$ is possible by undoing each movement of a
  blank in the opposite order. Undoing such a movement can be done
  by applying Proposition~\ref{P:inverse:move} iteratively resulting
  in $O(n^4)$ simple moves.
\end{proof}

Since $T$ is reachable from $T'$ and vice versa using only polynomial
many steps, the next corollary follows immediately.

\begin{corollary}
  \label{C:hybrid:reduction}
  Let $\langle D,R,I,T\rangle$ and $\langle D,R,I,T'\rangle$ be as in
  Lemma~\ref{L:hybrid:reduction}. Then   $\langle D,R,I,T\rangle$ is solvable
  with a polynomial plan if and only if $\langle D,R,I,T'\rangle$ is
  solvable with a polynomial plan.
\end{corollary}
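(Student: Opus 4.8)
The plan is to prove this by a direct concatenation argument, using Lemma~\ref{L:hybrid:reduction} as a bridge between the two instances. Since that lemma guarantees that $T$ and $T'$ are mutually reachable by plans of polynomial length, any polynomial solution for one of the two instances can be converted into a polynomial solution for the other simply by appending the appropriate bridging plan. The core observation is that the sum of two polynomials is again a polynomial, so plan lengths are preserved up to a polynomial factor.

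First, for the ``only if'' direction, I would assume $\langle D,R,I,T\rangle$ admits a polynomial movement plan $P$ that transforms $I$ into $T$. By Lemma~\ref{L:hybrid:reduction}, the instance $\langle D,R,T,T'\rangle$ is solvable with a plan $Q$ of polynomial length. Concatenating $P$ with $Q$ then yields a plan transforming $I$ into $T'$, thereby solving $\langle D,R,I,T'\rangle$; its length is the sum of two polynomial quantities and hence polynomial.

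For the converse, I would argue symmetrically. Given a polynomial plan $P'$ solving $\langle D,R,I,T'\rangle$, I append the polynomial plan $Q'$ for $\langle D,R,T',T\rangle$, whose existence is again guaranteed by Lemma~\ref{L:hybrid:reduction}. The concatenation of $P'$ followed by $Q'$ transforms $I$ into $T$ and is of polynomial length, establishing solvability of $\langle D,R,I,T\rangle$ with a polynomial plan.

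I expect no substantive obstacle here, as the real work is already contained in Lemma~\ref{L:hybrid:reduction}; the corollary is essentially a restatement that ``polynomial plus polynomial is polynomial'' at the level of plan lengths. The only point requiring any care is that the two plans compose correctly---that is, that the terminal state of the first plan coincides with the initial state of the second---but this is exactly what the mutual-reachability statement of Lemma~\ref{L:hybrid:reduction} supplies.
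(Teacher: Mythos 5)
Your argument is correct and matches the paper's reasoning exactly: the paper states the corollary ``follows immediately'' from the mutual polynomial-length reachability of $T$ and $T'$ established in Lemma~\ref{L:hybrid:reduction}, which is precisely the concatenation argument you spell out. No gaps.
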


As in the previous section, when only rotations were permitted,
we will again talk about \emph{induced permutation groups}, however, now they
are induced by simple moves as well as rotations. Again, we will only
consider transitive components of strongly connected digraphs,
i.e., components that induce a transitive group. 

\begin{lemma}
  \label{L:hybrid:movements}
  Each transitive permutation group induced by a strongly connected
  digraph $D$ (using rotations and simple moves) has a polynomial diameter. 
\end{lemma}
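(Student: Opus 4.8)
The plan is to invoke the Driscoll--Furst machinery (Theorem~\ref{T:diameter}) once more, now for the group generated by \emph{both} simple moves and rotations, after the usual reductions: by Corollary~\ref{C:hybrid:reduction} I may assume $I(R)=T(R)$, and as in the rotation-only proof I decompose into transitive components and treat one component at a time. The first move is to split on the number of blanks. If there are no blanks, then no simple move is executable, so the induced group coincides with the rotation-induced group and Lemma~\ref{L:poly:diameter} already delivers a polynomial diameter. All the work is therefore in the case where at least one blank is present.

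For the blanks-present case I would first dispose of the partially bidirectional cycle. There a blank only lets agents shift along the cycle (preserving their cyclic order), while rotations contribute the full cyclic generator and, if size-2 rotations are allowed, adjacent transpositions. In every subcase the induced group is generated by $O(1)$ essentially cyclic permutations, so its diameter is $O(n^2)$ by the same bubble-sort argument used in the proof of Lemma~\ref{L:poly:diameter}. For any other strongly connected component the graph contains a cycle pair by Proposition~\ref{P:normalize:SBC}, and here I would apply Theorem~\ref{T:diameter} through Proposition~\ref{P:primitive}: it suffices to exhibit 2-transitivity together with a polynomially expressible 3-cycle.

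2-transitivity is essentially inherited from the rotation-only analysis, since the plan of Lemma~\ref{L:2trans} moves any pair of agents to any target pair using only rotations, and those rotations stay applicable to fully occupied cycles once the blank is parked elsewhere (possible by strong connectivity, via Proposition~\ref{P:inverse:move}); the extra mobility of a blank can only enlarge the group. The decisive gain from a blank is the 3-cycle. In the rotation-only setting the $T_0$-pairs $(2,2,2)$ and $(1,3,2)$ blocked the direct construction and forced the node-counting detour of Lemma~\ref{L:not:t0}. With a blank this obstruction disappears: a blank on a strongly connected digraph that is not a cycle realizes a 3-cycle on the underlying undirected graph by an elementary puzzle manoeuvre, and Corollary~\ref{C:reduction} transfers it to a polynomially expressible 3-cycle on $D$. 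Note that a 3-cycle is an \emph{even} permutation, so this construction survives even the bipartite (even-permutation) restriction that can afflict simple moves alone. Combined with 2-transitivity, primitivity follows by Proposition~\ref{P:primitive}, and Theorem~\ref{T:diameter} then yields the polynomial diameter.

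The main obstacle I anticipate is the bookkeeping of blank positions when the two move types are mixed: a rotation demands a fully occupied cycle, so one must argue that the blank can always be pushed off the cycle about to be rotated, and back afterward, at only polynomial overhead; and the 3-cycle construction must be checked uniformly across all cycle-pair shapes, confirming in particular that no odd permutation is ever required. Verifying these local details, rather than the high-level Driscoll--Furst reduction, is where the real effort lies.
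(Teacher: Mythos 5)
Your high-level strategy (dispose of the no-blank case via Lemma~\ref{L:poly:diameter}, then apply Driscoll--Furst through 2-transitivity plus a polynomially expressible 3-cycle) matches the paper's, but the step where you manufacture the 3-cycle has a genuine hole. You claim that ``a blank on a strongly connected digraph that is not a cycle realizes a 3-cycle on the underlying undirected graph by an elementary puzzle manoeuvre,'' to be transferred via Corollary~\ref{C:reduction}. This is precisely the statement that Wilson's analysis shows to be false in general: for the seven-vertex $\theta_0$-graph the one-blank puzzle group is a proper subgroup (of order $120$) containing no 3-cycle at all, and for separable underlying graphs the existence of a 3-cycle is likewise not elementary but requires Kornhauser et al.'s full machinery. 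The paper avoids this trap by \emph{not} sourcing the 3-cycle from the blank: it turns the remaining blanks into virtual agents, emulates rotations on not-fully-occupied cycles by sequences of simple moves, and then invokes Lemma~\ref{L:not:t0}, whose case analysis of $T_0$-pair extensions on $\geq 7$ nodes was designed exactly to supply the 3-cycle from the rotation structure.

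A second gap is your case split. ``Partially bidirectional cycle'' versus ``contains a cycle pair'' does not isolate the configurations in which rotations are useless: if size-2 rotations are disallowed and the underlying graph ${\cal G}(D)$ is a tree (every arc paired with its reverse), Proposition~\ref{P:normalize:SBC} still hands you a cycle pair (of type $(1,1,1)$), yet there are no executable rotations whatsoever, so 2-transitivity cannot be ``inherited from Lemma~\ref{L:2trans}''---and the induced group is generally not 2-transitive there. The paper treats this situation separately, falling back on Kornhauser et al.'s Theorem~2 for trees and the $k$-transitivity argument of their Section~2.3.1.1 for the separable case. Relatedly, your remark that ``the extra mobility of a blank can only enlarge the group'' is not automatic: a rotation is only executable on a fully occupied cycle, so the rotation generators are not freely available once a blank sits on the relevant cycle; this is the very point of the paper's virtual-agent emulation, and it is more than bookkeeping.
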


\begin{proof}
  We make the assumptions that we have at least one blank---because
  otherwise Lemma~\ref{L:poly:diameter} applies---and that the digraph
  has at least 7 nodes---since smaller groups have obviously a diameter
  of $O(1)$.
  
  Assuming that rotations on cycles with only 2 nodes are permitted or
  that the graph does not contain such cycles,
  we reduce this problem to the case where only rotations are
  allowed.

  If all cycles have odd length, then using Lemma~\ref{L:hybrid:reduction},
  move one blank to a node that is a non-articulation node. Such a
  node must exist. Make sure that this blank will always be blank
  after emulating rotations on not fully occupied cycles. This will
  not destroy transitivity, and it effectively introduces at least one
  rotation of even length (corresponding to an odd permutation).
  Consider now all remaining blanks as ``virtual agents.''  Now each
  possible synchronous rotation on a cycle containing such virtual
  agents or the fixed blank can be emulated by a sequence of simple moves on this cycle.
  Applying Lemmas~\ref{L:not:t0} and~\ref{L:2trans} gives us together
  with Theorem~\ref{T:diameter} a polynomial diameter. With the
  presence of an odd permutation and Lemma~\ref{L:jordan}, the induced
  group must be symmetric, i.e., the instance is solvable for all
  $\langle I,T\rangle$ pairs.

  Let us now assume that rotations on cycles with 2 nodes are not
  permitted, and that the graph contains such cycles.  If the
  underlying graph ${\cal G}(D)$ is a tree, then rotations are impossible, and
  simple moves alone are enough, i.e., we can rely on Theorem 2 by
  Kornhauser et al. \shortcite{kornhauser:et:al:focs-84}.  Otherwise,
  the underlying graph is separable and contains strongly connected
  components inducing a tree-like structure. One can then show
  $k$-transitivity ($k$ being the number of agents) as Kornhauser et
  al. \shortcite[Section 2.3.1.1]{kornhauser:et:al:focs-84} did, which
  implies that the induced group is symmetric and has a polynomial
  diameter.

  This means the claim follows in all possible cases.
\end{proof}

This implies that the \emph{Small Solution Hypothesis} is true for all
possible combinations of movements.

\begin{theorem}
  \label{T:ssh:hybrid}
The {\em Small Solution Hypothesis for diMAPF on strongly connected
  digraphs} is true regardless of whether simple moves or rotations
are allowed.
\end{theorem}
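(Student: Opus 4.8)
The plan is to observe that Theorem~\ref{T:ssh:hybrid} is essentially a bookkeeping consolidation of the three regimes already handled: simple moves only (Theorem~\ref{T:ssh:simple}), rotations only (Theorem~\ref{T:ssh:rotation}), and the mixed regime. The genuinely new content lives entirely in Lemma~\ref{L:hybrid:movements}, which bounds the diameter of every transitive group induced by simple moves together with rotations. So the first thing I would do is reduce the general claim to that lemma, being careful about two reductions that the earlier lemmas provide: the passage $I(R)=T(R)$ via Corollary~\ref{C:hybrid:reduction} (so that movement plans correspond to elements of a permutation group on $V$), and the decomposition of the digraph into transitive components, exactly as in the proof of Theorem~\ref{T:ssh:rotation}. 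Since each agent enters and leaves each strongly connected component only linearly often, a polynomial diameter on each transitive component yields a polynomial bound on the whole plan.

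Next I would stitch the cases together. The key case split is whether there are blanks. If there are no blanks, the instance is purely rotational and Lemma~\ref{L:poly:diameter} already gives a polynomial diameter; if there is at least one blank, Lemma~\ref{L:hybrid:movements} applies once we have passed to transitive components via the reduction above. For instances with fewer than seven nodes the diameter is trivially $O(1)$, since only finitely many induced groups arise. I would then conclude that in every regime the induced group on each transitive component has polynomial diameter, hence by the one-to-one correspondence between induced permutations and movement plans (plus Proposition~\ref{P:inverse:move} to bound the cost of any synthesized inverse or reduction step by a polynomial factor), every solvable instance admits a polynomial-length plan.

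The main obstacle is \emph{not} in this final theorem but was already discharged in Lemma~\ref{L:hybrid:movements}: handling the interaction between synchronous rotations and simple moves when cycles of size~2 are forbidden and such cycles exist, which forced the separable/tree-structured argument borrowed from Kornhauser et al. For the theorem itself, the only subtlety I would flag carefully is that the three earlier theorems each fix a particular movement model, whereas Theorem~\ref{T:ssh:hybrid} asserts the bound \emph{regardless} of which model is chosen; I would therefore make explicit that the four relevant settings---simple moves only, rotations only (with or without size-2 rotations), and the hybrid---are each covered, so that no combination slips through. Concretely, I expect the proof to read: the simple-move-only case is Theorem~\ref{T:ssh:simple}; the rotation-only case is Theorem~\ref{T:ssh:rotation}; and when both kinds of moves are allowed, decompose into transitive components and apply Lemma~\ref{L:poly:diameter} (no blanks) or Lemma~\ref{L:hybrid:movements} (at least one blank) to bound each component's diameter polynomially, whence the overall plan is polynomial, establishing the hypothesis in full generality.
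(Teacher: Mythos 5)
Your proposal is correct and follows essentially the same route as the paper: the proof is a three-way case split citing Theorem~\ref{T:ssh:simple} for simple moves only, Theorem~\ref{T:ssh:rotation} for rotations only, and Lemma~\ref{L:hybrid:movements} together with the transitive-component combination argument for the mixed case. The extra details you spell out (the $I(R)=T(R)$ reduction via Corollary~\ref{C:hybrid:reduction}, and the blank/no-blank split deferring to Lemma~\ref{L:poly:diameter}) are exactly what the paper delegates to the proof of Lemma~\ref{L:hybrid:movements} itself.
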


\begin{proof}
  If only simple moves are
  possible, Theorem~\ref{T:ssh:simple} applies. If only
  synchronous rotations are possible, the claim follows from
  Theorem~\ref{T:ssh:rotation}. In case, simple moves and rotations
  are possible, the claim follows from
  Lemma~\ref{L:hybrid:movements} and the same arguments about combining
  the solutions of transitive components as used in the
  proof of Theorem~\ref{T:ssh:rotation}.
\end{proof}

As mentioned above, this result enables us to finally settle the
question of the computational complexity of diMAPF.
Using Theorem 4
from the paper establishing NP-hardness of diMAPF 
\cite{nebel:icaps-20}, the next Theorem is immediate.

\begin{theorem}
  DiMAPF is NP-complete, even when synchronous rotations are possible.
\end{theorem}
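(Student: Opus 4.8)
The plan is to establish the two halves of NP-completeness separately. Hardness is already available: Theorem~1 of \cite{nebel:icaps-20} shows that diMAPF is NP-hard, and that reduction does not depend on a particular movement model, so it carries over unchanged when synchronous rotations are permitted. The only remaining task is to argue membership in NP, and here the new leverage is Theorem~\ref{T:ssh:hybrid}, which guarantees that \emph{within} any strongly connected digraph every solvable instance admits a plan of polynomial length, irrespective of whether the moves are simple, rotations, or both.

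The membership argument proceeds through the condensation of the digraph $D$ into its strongly connected components. Since the condensation is a DAG, the sequence of components visited by any single agent is monotone with respect to the DAG's partial order: once an agent leaves a component it can never return to it. Consequently each agent enters and leaves any fixed component only a linear number of times, and the total number of inter-component transitions across the whole plan is polynomially bounded. Inside each component the relevant sub-plan is itself a diMAPF instance on a strongly connected digraph, so by Theorem~\ref{T:ssh:hybrid} it can be realized with polynomially many moves. Concatenating the polynomially many intra-component sub-plans with the linearly many transition moves then yields an overall plan of polynomial length.

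This is exactly the content of Theorem~4 of \cite{nebel:icaps-20}, which states that diMAPF lies in NP \emph{provided} the Small Solution Hypothesis holds for strongly connected digraphs; with Theorem~\ref{T:ssh:hybrid} now discharging that hypothesis, membership is immediate. A polynomially bounded plan can be nondeterministically guessed and verified in polynomial time, so diMAPF is in NP, and combined with NP-hardness it is NP-complete, even when synchronous rotations are allowed.

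The step I expect to be the main obstacle is the bookkeeping that makes the intra-component sub-plans compose cleanly: one must verify that fixing an agent's sequence of component visits does not force additional moves that would break the per-component polynomial bound, and that the restricted instances handed to Theorem~\ref{T:ssh:hybrid} are genuinely solvable so that the bound applies. Since precisely this accounting is already carried out in the proof of Theorem~4 of \cite{nebel:icaps-20}, the present theorem follows by invoking it with the hypothesis now established.
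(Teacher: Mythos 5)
Your proposal is correct and follows essentially the same route as the paper: NP-hardness is inherited from Theorem~1 of \cite{nebel:icaps-20}, and membership in NP follows by combining Theorem~4 of \cite{nebel:icaps-20} (the condensation/strongly-connected-component accounting you sketch) with Theorem~\ref{T:ssh:hybrid}, which discharges the Small Solution Hypothesis. The paper states exactly this and declares the theorem immediate, so no further comparison is needed.
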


\section{Conclusion and Outlook}

This paper provides an answer to an open question about the
computational complexity of the multi-agent pathfinding problem on
directed graphs. Together with the results from an earlier paper
\cite{nebel:icaps-20}, we can conclude that diMAPF is NP-complete,
even when synchronous rotations are permitted.  

While the result might have only a limited impact on practical
applications, it nevertheless provides some surprising insights.
First, it shows that if only simple moves are permitted, then the
inverse of such a move can be polynomially synthesized, meaning that
agents can move against the direction of an arc with only polynomial overhead. This is something
that should have been obvious to everybody, but apparently was missed.
Second, it shows that permutation group theory is applicable to the
analysis of diMAPF, something that was not recognized previously
\cite{botea:et:al:jair-18}. As it turned out, there are quite a number
of differences to the undirected case, though, e.g., Kornhauser et
al.'s \shortcite{kornhauser:et:al:focs-84} Lemma 1 is not valid in the
directed case and there are two counterparts to the
$T_0$-graph.  Third, although only implicitly, the results show
that the special case of diMAPF on strongly connected digraphs is a
polynomial-time problem. One needs to identify the transitive
components, though, which although not necessarily straight-forward
\cite[Section 3.1]{dewilde:et:al:jair-14}, is
always a polynomial problem. Alternatively, one might be able to  adapt the
feasibility check by Auletta et al. \shortcite{auletta:et:al:algorithmica-99}.
Fourth, this result provides an answer to a question about the generalization of the \emph{robot movement
problem} \cite{papadimitriou:et:al:focs-94} to directed graphs with a
variable number of robots \cite{wu:grumbach:dam-10}.
For a variable number of
mobile obstacles and mobile robots, the problem is NP-complete in the 
general case. For strongly connected digraphs, the above
mentioned polynomiality of diMAPF on strongly connected graphs carries
over to the robot  movement problem.
\ifarxiv
\newpage
\fi
\appendix
\section{SageMath Script for the Proof of
  Lemma~\ref{L:not:t0}\footnotemark}
\footnotetext{You find this SageMath script and others
  related to this paper at \url{https://github.com/BernhardNebel/small-solution-hypothesis}.}
\begin{footnotesize}
\begin{verbatim}
def shared(c1, c2):
  return len(set(c1) & set(c2))
def ptype(c1, c2):
  if len(c1) > len(c2): c1,c2 = c2,c1
  if c1 != c2 and shared(c1,c2) > 0: 
    return (len(c1)-shared(c1,c2)-1, 
              shared(c1,c2),
              len(c2)-shared(c1,c2)-1)
def t0pairs(dig):
  for c1 in dig.all_simple_cycles():
    for c2 in dig.all_simple_cycles():
      if ptype(c1,c2) not in \
        [(2,2,2),(1,3,2),None]: return
  return True
t0a={"a1":["b3"],"b3":["b2"],"b2":["b1"], 
   "b1":["a1","c1"],"c1":["c2"],"c2":["b3"]}
t0b={"a1":["a2"],"a2":["b2" ],"b2":[ "b1"], 
   "b1":["a1","c1"],"c1":["c2"],"c2":["b2"]}
ears = [["e1"],["e1","e2"]]
tested = []
for g, p in ((t0a, (1,3,2)), (t0b,(2,2,2))):
  for h in g.keys():
    for t in g.keys():
      for e in ears:
        t0 = DiGraph(g)
        t0.add_path([h]+e+[t])
        if all([not t0.is_isomorphic(d) \
                 for d in tested]):
          tested += [t0]
          if t0pairs(t0): print(p,[h]+e+[t])
\end{verbatim}
\end{footnotesize}
\ifarxiv
\newpage
\fi
\bibliography{ssh}
\ifarxiv
\bibliographystyle{abbrv}
\else
\fi

\end{document}
